\newcommand{\Dset}{\mathcal{D}}
\newcommand{\Dspace}{\mathbb{D}}
\newcommand{\R}{\mathbb{R}}
\newcommand{\F}{\mathcal{F}}
\renewcommand{\L}{{\mathcal L}}
\DeclareMathOperator*{\argmin}{argmin}
\newtheorem{theorem}{Theorem}
\theoremstyle{plain}
\newtheorem{lemma}{Lemma}
\begin{document}

\title{Multi-criteria Anomaly Detection using \\
Pareto Depth Analysis}
\author{
Ko-Jen Hsiao, Kevin S.~Xu, Jeff Calder, and Alfred O.~Hero III\\
University of Michigan, Ann Arbor, MI, USA 48109\\
\texttt{\{coolmark,xukevin,jcalder,hero\}@umich.edu}
}

\maketitle

\begin{abstract}

We consider the problem of identifying patterns in a data set that exhibit 
anomalous behavior, often referred to as anomaly detection. 
In most anomaly detection algorithms, the dissimilarity between data samples 
is calculated by a single criterion, such as Euclidean distance. 
However, in many cases there may not exist a single dissimilarity measure that captures all possible anomalous patterns. In such a case, multiple criteria can be defined, and one can test for anomalies by scalarizing the multiple criteria using a linear combination of them.
If the importance of the different criteria are not known in 
advance, the algorithm may need to be executed multiple times with 
different choices of weights in the linear combination. 
In this paper, we introduce a novel non-parametric \emph{multi-criteria} 
anomaly detection method using \emph{Pareto depth analysis} (PDA). 
PDA uses the concept of Pareto optimality to detect anomalies under 
multiple criteria without having to 
run an algorithm multiple times with different choices of weights. 
The proposed PDA approach scales \emph{linearly} in the number of criteria 
and is provably better than linear combinations of the criteria.  

\end{abstract}

\section{Introduction}
\label{intro}
Anomaly detection is an important problem that has been studied in a 
variety of areas and used in diverse applications including intrusion 
detection, fraud detection, and image processing 
\citep{Hodge2004,Chandola2009}. 
Many methods for anomaly detection have been developed using both parametric 
and non-parametric approaches. 
Non-parametric approaches typically involve the calculation of dissimilarities 
between data samples. 
For complex high-dimensional data, multiple dissimilarity measures 
corresponding to different criteria 
may be required to detect certain types of anomalies. 
For example, consider the problem of detecting anomalous object trajectories 
in video sequences. 
Multiple criteria, such as dissimilarity in object speeds or trajectory 
shapes, can be used to detect a greater range of anomalies than any single 
criterion. 
In order to perform anomaly detection using these multiple criteria, one 
could first combine the dissimilarities using a linear combination. 
However, in many applications, the importance of the criteria are 
not known in advance. 
It is difficult to determine how much weight to assign to each dissimilarity 
measure, so one may have to choose multiple weights using, for 
example, a grid search. 
Furthermore, when the weights are changed, the anomaly detection algorithm 
needs to be re-executed using the new weights. 

In this paper we propose a novel non-parametric 
\emph{multi-criteria} anomaly detection approach using 
\emph{Pareto depth analysis} (PDA). 
PDA uses the concept of Pareto optimality to detect anomalies without having 
to choose weights for different criteria. 
Pareto optimality is the typical method for defining optimality when there may 
be multiple conflicting criteria for comparing items. 
An item is said to be Pareto-optimal if there does not exist another item that 
is better or equal in all of the criteria. 
An item that is Pareto-optimal is optimal in the usual sense 
under some combination, not necessarily linear, of the criteria. 
Hence, PDA is able to detect anomalies under multiple 
combinations of the criteria without explicitly forming these 
combinations. 

The PDA approach involves creating \emph{dyads} corresponding to 
dissimilarities 
between pairs of data samples under all of the dissimilarity measures. 
Sets of Pareto-optimal dyads, called \emph{Pareto fronts}, are then computed. The first Pareto front (depth one) is the set of non-dominated dyads. The second Pareto front (depth two) is obtained by removing these non-dominated dyads, i.e.~peeling off the first front, and recomputing the first Pareto front of those remaining. This process continues until no dyads remain. In this way, each dyad is assigned to a Pareto front at some depth (see  Figure \ref{bi_sample} for illustration).
Nominal and anomalous samples are located near different Pareto 
front depths; thus computing the front depths of the dyads corresponding to a test 
sample can discriminate between nominal and anomalous 
samples. 
The proposed PDA approach scales \emph{linearly} 
in the number of criteria, which 
is a significant improvement compared to selecting multiple weights via a 
grid search, which scales exponentially in the number of criteria. Under assumptions that the multi-criteria dyads can be modeled as a realizations from a smooth $K$-dimensional density we provide a mathematical analysis of the behavior of the first Pareto front. This analysis shows in a precise sense that PDA can outperform a test that uses a linear combination of the criteria.
Furthermore, this theoretical prediction is experimentally validated by comparing PDA to several state-of-the-art anomaly detection algorithms in two experiments involving both synthetic and 
real data sets.

\begin{figure}[tp]
\begin{center}
  \parbox{\textwidth}{
  \includegraphics[width=4.6 cm]{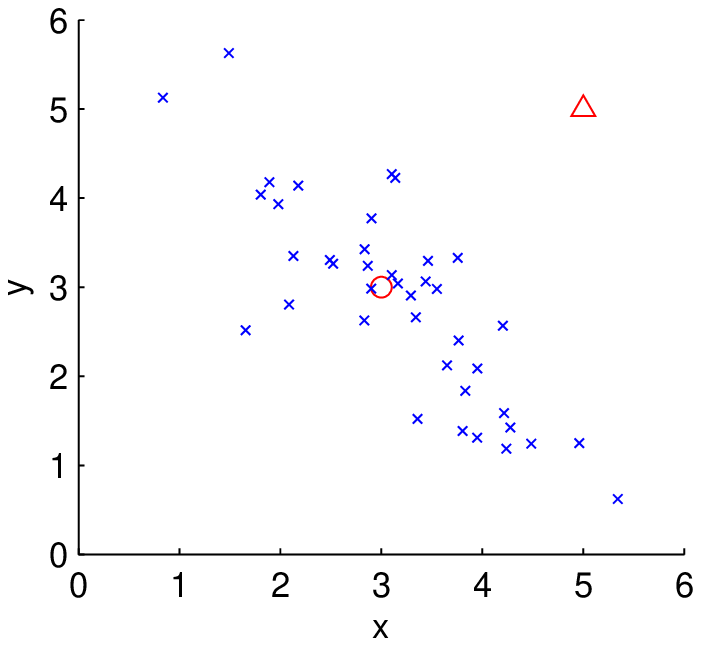}
  \includegraphics[width=4.6 cm]{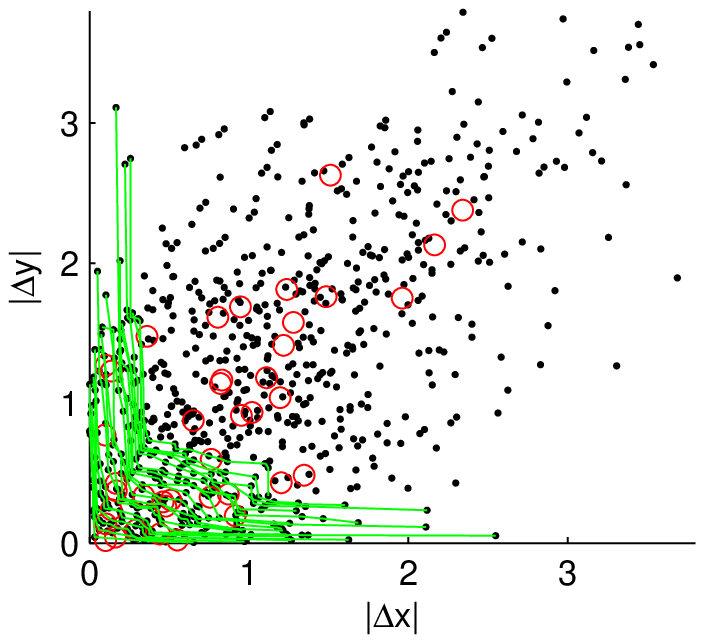}
  \includegraphics[width=4.6 cm]{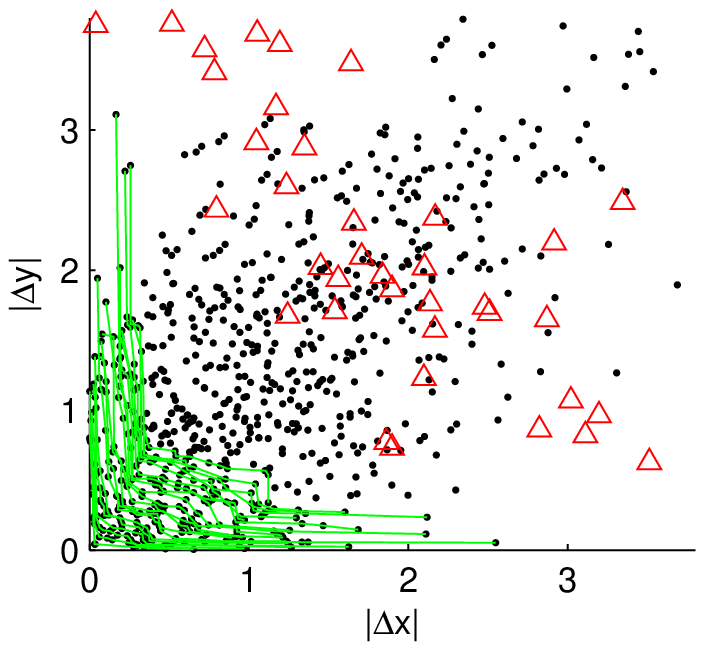}

  }
  \caption{\emph{Left:} Illustrative example with $40$ training samples 
  (blue x's) and $2$ test samples (red circle and triangle) in $\R^2$. 
  \emph{Center:} Dyads for the training samples (black dots) along with first 
  $20$  Pareto fronts (green lines) under two criteria: $|\Delta x|$ and 
  $|\Delta y|$. 
  The Pareto fronts induce a partial ordering on the set of dyads. 
  Dyads associated with the test sample marked by the red circle concentrate around 
  shallow fronts (near the lower left of the figure). 
  \emph{Right:} Dyads associated with the test sample marked by the red triangle 
  concentrate around deep fronts.}
  \label{bi_sample}
  \end{center}
\end{figure}

The rest of this paper is organized as follows. 
We discuss related work in Section \ref{related}. 
In Section \ref{prealg} we provide an introduction to Pareto fronts and present a 
theoretical analysis of the properties of the first Pareto front. 
Section \ref{detection} relates Pareto fronts to the multi-criteria 
anomaly detection problem, which leads to the PDA anomaly detection algorithm. Finally we present two experiments in Section \ref{exp} to evaluate the 
performance of PDA. 

\section{Related work}
\label{related}
Several machine learning methods utilizing Pareto optimality have previously 
been proposed; an overview can be found in \citep{Jin2008}. 
These methods typically formulate machine learning problems as 
multi-objective optimization problems where finding even the first Pareto 
front is quite difficult. 
These methods differ from our use of Pareto optimality because we consider \emph{multiple} Pareto fronts created from 
a finite set of items, so 
we do not need to employ sophisticated methods in order to find these 
fronts.

\citet{pareto1} introduced a method for gene ranking using Pareto 
fronts that is related to our approach. 
The method ranks genes, in order of interest to a 
biologist, by creating Pareto fronts of the data samples, i.e.~the genes. 
In this paper, we consider Pareto fronts of \emph{dyads}, which correspond to 
dissimilarities between \emph{pairs} of data samples rather than the samples 
themselves, and use the distribution 
of dyads in Pareto fronts to perform multi-criteria anomaly detection rather than ranking.  

Another related area is multi-view learning \citep{Blum1998,Sindhwani2005}, 
which involves learning from data represented by multiple sets of features, 
commonly referred to as ``views''. 
In such case, training in one view helps to improve learning in another view. 
The problem of view disagreement, where samples take different classes in 
different views, has recently been investigated \citep{Christoudias2008}. 
The views are similar to criteria in our problem setting. 
However, in our setting, different criteria may be orthogonal and could even 
give contradictory information; hence there may be severe view disagreement. 
Thus training in one view could actually worsen performance 
in another view, so the problem we consider differs from multi-view learning. 
A similar area is that of multiple kernel learning \citep{Gonen2011}, 
which is typically 
applied to supervised learning problems, unlike the unsupervised anomaly 
detection setting we consider.

Finally, many other anomaly detection methods have previously been proposed.
\citet{Hodge2004} and \citet{Chandola2009} both provide extensive surveys of 
different anomaly detection methods and applications. 
Nearest neighbor-based methods are closely related to the proposed 
PDA approach. 
\citet{Byers1998} proposed to use the distance between a sample and its 
$k$th-nearest neighbor as the anomaly score for the sample; similarly, 
\citet{Angiulli2002} and \citet{Eskin2002} proposed to the use the sum of the 
distances between a sample and its $k$ nearest neighbors. 
\citet{Breunig2000} used an anomaly score based on the local density of the 
$k$ nearest neighbors of a sample. 
\citet{Hero2006} and \citet{Sricharan2011} introduced non-parametric adaptive anomaly detection methods using geometric entropy minimization, based on random $k$-point 
minimal spanning trees and bipartite $k$-nearest neighbor ($k$-NN) graphs, respectively. 
\citet{Zhao2009} proposed an anomaly detection algorithm k-LPE using local 
p-value estimation (LPE) based on a $k$-NN graph. 
These $k$-NN anomaly detection schemes only depend on the data through the pairs of data points (dyads) that define the edges in the $k$-NN graphs.

All of the aforementioned methods are designed for \emph{single-criteria} anomaly 
detection. 
In the \emph{multi-criteria} setting, the single-criteria algorithms must be executed 
multiple times with different weights, unlike the PDA anomaly detection 
algorithm that we propose in Section \ref{detection}.

\section{Pareto depth analysis}
\label{prealg}

The PDA method proposed in this paper utilizes the notion of Pareto optimality, 
which has been studied in many application areas in economics, computer science, 
and the social sciences among others \citep{ehrgott}. 
We introduce Pareto optimality and define the notion of a 
Pareto front. 

Consider the following problem: given $n$ items, denoted by the set $\mathcal{S}$, 
and $K$ criteria for evaluating each item, denoted by functions $f_1, \ldots, 
f_K$, select $x \in \mathcal{S}$ that minimizes $[f_1(x), \ldots, f_K(x)]$. 
In most settings, it is not possible to identify a single item $x$ that 
simultaneously minimizes $f_i(x)$ for all $i \in \{1, \ldots, K\}$. 
A minimizer can be found by combining the $K$ criteria using a linear 
combination of the $f_i$'s and finding the minimum of the combination. 
Different choices of (non-negative) weights in the linear combination 
could result in 
different minimizers; a set of items that are minimizers under some linear 
combination can then be created by using a grid search over the weights, 
for example. 

A more powerful approach involves finding the set of Pareto-optimal items. 
An item $x$ is said to \emph{strictly dominate} another item $x^*$ if 
$x$ is no greater than $x^*$ in each criterion and 
$x$ is less than $x^*$ in at least one criterion. 
This relation can be written as $x\succ x^*$ if $f_i(x) \leq f_i(x^*)$ for 
each $i$ and $f_i(x) < f_i(x^*)$ for some $i$. 
The set of Pareto-optimal items, called the \emph{Pareto front}, is the 
set of items in $\mathcal{S}$ that are not strictly dominated by another item 
in $\mathcal{S}$. 
It contains all of the minimizers that are found using linear combinations, 
but also includes other items that cannot be found by linear 
combinations.
Denote the Pareto front by $\F_1$, which we call the first Pareto front. 
The second Pareto front can be constructed by finding items that are 
not strictly dominated by any of the remaining items, which are members of 
the set $\mathcal{S} \setminus \F_1$. 
More generally, define the $i$th Pareto front by
$$\F_i=\text{Pareto front of the set }
\mathcal{S} \setminus \left(\bigcup_{j=1}^{i-1}\F_j\right).$$
For convenience, we say that a Pareto front $\F_i$ is \emph{deeper} 
than $\F_j$ if $i>j$. 

\subsection{Mathematical properties of Pareto fronts}
\label{theory}
The distribution of the number of points on the first Pareto front was first studied by Barndorff-Nielsen and Sobel in their seminal work~\cite{nielsen1966}.  The problem has garnered much attention since; for a survey of recent results see~\cite{bai2005}.  We will be concerned here with properties of the first Pareto front that are relevant to the PDA anomaly detection algorithm and thus have not yet been considered in the literature.  Let $Y_1,\dots,Y_n$ be independent and identically distributed~(i.i.d.)~on $\R^d$ with density function $f: \R^d \to \R$. For a measurable set $A \subset \R^d$, we denote by $\F_A$ the points on the first Pareto front of $Y_1,\dots,Y_n$ that belong to $A$.  For simplicity, we will denote $\F_1$ by $\F$ and use $|\F|$ for the cardinality of $\F$.  In the general Pareto framework, the points $Y_1,\dots,Y_n$ are the images in $\R^d$ of $n$ feasible solutions to some optimization problem under a vector of objective functions of length $d$. 
In the context of this paper, each point $Y_l$ corresponds to a dyad $D_{ij}$, which we define in Section \ref{detection}, and $d=K$ is the number of criteria.  
A common approach in multi-objective optimization is linear scalarization~\citep{ehrgott}, which constructs a new single criterion as a convex combination of the $d$ criteria.  It is well-known, and easy to see, that linear scalarization will only identify Pareto points on the boundary of the convex hull of $\bigcup_{x \in \F} (x + \R^d_+)$, where $\R^d_+ = \{ x \in \R^d \, | \, x_i\geq 0, i=1\,\dots,d\}$.
Although this is a common motivation for Pareto methods, there are, to the best of our knowledge, no results in the literature regarding how many points on the Pareto front are missed by scalarization. We present such a result here. We define
\[\L = \bigcup_{\alpha \in \R^d_+} \argmin_{x \in S_n} \left\{\sum_{i=1}^d \alpha_i x_i \right\}, \ \ S_n =\{Y_1,\dots,Y_n\}. \]
The subset $\L \subset \F$ contains all Pareto-optimal points that can be obtained by some selection of weights for linear scalarization.  We aim to study how large $\L$ can get, compared to $\F$, in expectation.  In the context of this paper, if some Pareto-optimal points are not identified, then the anomaly score~(defined in section \ref{score})~will be artificially inflated, making it more likely that a non-anomalous sample will be rejected. Hence the size of $\F\setminus\L$ is a measure of how much the anomaly score is inflated and the degree to which Pareto methods will outperform linear scalarization.  

Pareto points in $\F\setminus \L$ are a result of non-convexities in the Pareto front.  We study two kinds of non-convexities: those induced by the geometry of the domain of $Y_1,\dots,Y_n$, and those induced by randomness. We first consider the geometry of the domain. Let $\Omega \subset \R^d$ be bounded and open with a smooth boundary $\partial  \Omega$ and suppose the density $f$ vanishes outside of $\Omega$.  For a point $z \in \partial \Omega$ we denote by $\nu(z)=(\nu_1(z),\dots,\nu_d(z))$ the unit inward normal to $\partial \Omega$.
For $T \subset \partial \Omega$, define $T_h\subset \Omega$ by $T_h = \{ z + t\nu \, | \, z \in T, 0 < t \leq h\}$.
Given $h>0$ it is not hard to see that all Pareto-optimal points will almost surely lie in $\partial\Omega_h$ for large enough $n$, provided the density $f$ is strictly positive on $\partial \Omega_h$.  Hence it is enough to study the asymptotics for $E|\F_{T_h}|$ for $T \subset \partial \Omega$ and $h>0$.
\begin{theorem}\label{thm:asym}
Let $f \in C^1(\overline{\Omega})$ with $\inf_\Omega f >0$.  Let $T \subset \partial \Omega$ be open and connected such that
\[\inf_{z\in T} \min(\nu_1(z),\dots,\nu_d(z)) \geq \delta > 0, \quad \text{and} \quad \{y \in \overline{\Omega} \, : \, y \preceq x\} = \{x\}, \ \ {\rm for} \ \ x \in T.\]
Then for $h>0$ sufficiently small, we have 
\[E|\F_{T_h} | = \gamma n^{\frac{d-1}{d}} + \delta^{-d-1}O\left(n^{\frac{d-2}{d}}\right) \ \ {\rm as} \ \ n \to \infty,\]
where
$\displaystyle \gamma = d^{-1}(d!)^{\frac{1}{d}}\Gamma(d^{-1})\int_T f(z)^{\frac{d-1}{d}} (\nu_1(z)\cdots \nu_d(z))^{\frac{1}{d}}  dz$.
\end{theorem}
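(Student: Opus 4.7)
The plan is to use a standard first-moment computation together with a tubular change of coordinates near $T$, and then to carry out a Laplace-type asymptotic analysis with careful tracking of the $\delta$-dependence.

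\textbf{First moment formula and tubular coordinates.} Since the density $f$ is continuous, ties occur with probability zero, so $Y_i$ lies on the first Pareto front if and only if no other $Y_j$ falls in the closed cone $Y_i - \R^d_+$. Conditioning on $Y_i = y$ yields
\[E|\F_{T_h}| = n\int_{T_h} f(y)\bigl(1-G(y)\bigr)^{n-1}\,dy,\qquad G(y) := \int_{(y-\R^d_+)\cap\Omega} f.\]
Because $\partial\Omega$ is smooth and $T$ is relatively compact, for $h$ sufficiently small the map $(z,t)\mapsto z+t\nu(z)$ is a diffeomorphism from $T\times(0,h]$ onto $T_h$ with Jacobian $1+O(t)$.

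\textbf{Local volume estimate.} For $y = z + t\nu(z)$ with $z \in T$ and $t$ small, I would replace $\Omega$ near $z$ by its tangent half-space $\{w:\nu(z)\cdot(w-z)\geq 0\}$. The intersection of $y-\R^d_+$ with this half-space is the simplex $S_t = \{w\in\R^d_+:\nu(z)\cdot w\leq t\}$, whose Lebesgue measure equals $t^d/(d!\,\nu_1(z)\cdots\nu_d(z))$ by the standard formula. The diameter of $S_t$ is $O(t/\delta)$, so the curvature of $\partial\Omega$ perturbs the bounding face by $O((t/\delta)^2)$ over a $(d-1)$-area of order $(t/\delta)^{d-1}$, producing a volume correction of $O(t^{d+1}/\delta^{d+1})$. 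The hypothesis $\{y\in\overline\Omega:y\preceq x\}=\{x\}$ for $x\in T$, together with compactness of $T$, ensures that for $h$ small the global set $(y-\R^d_+)\cap\Omega$ coincides with this local simplex, uniformly in $z\in T$. Since $f\in C^1$ and $f(y') = f(z)+O(t/\delta)$ on $S_t$, I obtain
\[G\bigl(z+t\nu(z)\bigr) = \frac{f(z)\,t^d}{d!\,\nu_1(z)\cdots\nu_d(z)} + O\!\left(\frac{t^{d+1}}{\delta^{d+1}}\right).\]

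\textbf{Laplace asymptotics.} Substituting $t = n^{-1/d} s$, the leading factor $(1-G)^{n-1}\approx \exp(-nG)$ becomes $\exp\bigl(-f(z)s^d/(d!\prod_i\nu_i(z))\bigr)$, and this same exponential dominates the integrand, showing that the contribution from $t\gtrsim n^{-1/d}\log n$ is super-polynomially small (so the $s$-range can be extended to $[0,\infty)$ with negligible error). The prefactor $n\,dt$ contributes $n^{(d-1)/d}\,ds$. Evaluating $\int_0^\infty\exp(-cs^d)\,ds = d^{-1}\Gamma(1/d)c^{-1/d}$ with $c = f(z)/(d!\prod_i\nu_i(z))$, multiplying by the outer $f(z)$, and integrating over $T$ produces exactly the asserted constant $\gamma$ multiplied by $n^{(d-1)/d}$.

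\textbf{Remainders and main obstacle.} Three sources of error enter: (i) the Jacobian $J-1$ and the replacement $f(y)\to f(z)$, each of relative order $t=O(n^{-1/d})$, contributing $O(n^{(d-2)/d})$; (ii) the geometric error $O(t^{d+1}/\delta^{d+1})$ in $G$, which after multiplication by $n$ and integration becomes $O(\delta^{-d-1}n^{(d-2)/d})$; (iii) the higher-order terms in the expansion of $\log(1-G)$, which are of smaller order. Summing these yields the stated remainder $\delta^{-d-1}O(n^{(d-2)/d})$. The main obstacle is the geometric estimate of the second step with \emph{explicit} $\delta$-dependence: the simplex $S_t$ is long and thin when some $\nu_i$ approach $\delta$, so the curvature of $\partial\Omega$ and the variation of $f$ across $S_t$ must be controlled uniformly in the direction $\nu(z)$. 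A crude quadratic bound would give only $\delta^{-2d}$; obtaining the sharper $\delta^{-d-1}$ requires separating the tangential scaling of the affected surface area from the normal deviation, as above.
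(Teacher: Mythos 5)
Your proposal is correct and follows essentially the same route as the paper's proof: the first-moment formula $E|\F_{T_h}| = n\int_{T_h} f(y)(1-G(y))^{n-1}dy$, the tubular change of coordinates with Jacobian $1+O(t)$, the local simplex approximation $G = \frac{f(z)t^d}{d!\,\nu_1(z)\cdots\nu_d(z)} + O(t^{d+1}/\delta^{-(d+1)\cdot(-1)})$ giving the $O(t^{d+1}/\delta^{d+1})$ error, and a Laplace/Gamma-function evaluation of the inner $t$-integral yielding the constant $\gamma$ and the $\delta^{-d-1}O(n^{(d-2)/d})$ remainder. Your rescaling $t = n^{-1/d}s$ with $(1-G)^{n-1}\approx e^{-nG}$ is a cosmetic variant of the paper's exact substitution $-s=(n-1)\ln(1-at^d+O(bt^{d+1}))$, and your explicit curvature accounting for the $\delta^{-d-1}$ factor is, if anything, slightly more detailed than the paper's.
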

The proof of Theorem \ref{thm:asym} is postponed to Appendix \ref{proofs}.
Theorem \ref{thm:asym} shows asymptotically how many Pareto points are contributed on average by the segment $T \subset \partial \Omega$.  The number of points contributed depends only on the geometry of $\partial \Omega$ through the direction of its normal vector $\nu$ and is otherwise independent of the convexity of $\partial \Omega$.  Hence, by using Pareto methods, we will identify significantly more Pareto-optimal points than linear scalarization when the geometry of $\partial \Omega$ includes non-convex regions.  For example, if $T \subset\partial \Omega$ is non-convex~(see left panel of Figure \ref{fig:theory}) and satisfies the hypotheses of Theorem \ref{thm:asym}, then for large enough $n$, all Pareto points in a neighborhood of $T$ will be unattainable by scalarization.  Quantitatively, if $f \geq C$ on $T$, then $E|\F\setminus \L| \geq  \gamma n^\frac{d-1}{d} + \delta^{-d-1} O(n^\frac{d-2}{d})$, as $n \to \infty$, where $\gamma \geq d^{-1}(d!)^\frac{1}{d} \Gamma(d^{-1})|T|\delta C^\frac{d-1}{d}$ 
and $|T|$ is the $d-1$ dimensional Hausdorff measure of $T$.  It has recently come to our attention that Theorem \ref{thm:asym}  appears in a more general form in an unpublished manuscript of \citet{baryshnikov2005}.

We now study non-convexities in the Pareto front which occur due to inherent randomness in the samples.  We show that, even in the case where $\Omega$ is convex, there are still numerous small-scale non-convexities in the Pareto front that can only be detected by Pareto methods. We illustrate this in the case of the Pareto box problem for $d=2$. 
\begin{theorem}\label{thm:small}
Let $Y_1,\dots,Y_n$ be independent and uniformly distributed on $[0,1]^2$.  Then
\[\frac{1}{2}\ln n + O(1) \leq E|\L| \leq  \frac{5}{6} \ln n + O(1), \ \ {\rm as} \ \ n\to \infty.\]
\end{theorem}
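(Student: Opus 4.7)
The plan is to use exchangeability and reduce to a single-point calculation: $E|\L| = n\Pr(Y_1 \in \L)$. A point $Y_1 = (a,b)$ lies in $\L$ iff some non-negative linear scalarization $\lambda x + y$ attains its minimum over $S_n$ at $Y_1$, equivalently iff there exists $\lambda \in [0,\infty]$ such that the clipped half-plane
\[
R_\lambda(a,b) := \{(x,y) \in [0,1]^2 : \lambda x + y \le \lambda a + b\}
\]
contains no other sample. Conditional on $Y_1 = (a,b)$, for any fixed $\lambda$ this emptiness probability equals $(1 - A(\lambda,a,b))^{n-1}$ with $A = |R_\lambda|$.

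\textbf{Lower bound.} For $(a,b) \in (0,1/2]^2$ the area $A(\lambda,a,b)$ is minimized over $\lambda$ at $\lambda^\ast = b/a$, where $R_{\lambda^\ast}$ is the triangle with vertices $(0,0)$, $(2a,0)$, $(0,2b)$ of area exactly $2ab$. Using only this single $\lambda^\ast$ gives $\Pr(Y_1 \in \L \mid Y_1=(a,b)) \ge (1-2ab)^{n-1}$ on $(0,1/2]^2$. After the change of variables $u = 2a,\ v = 2b$,
\[
E|\L| \ge \tfrac{n}{4}\int_0^1\!\!\int_0^1 (1 - uv/2)^{n-1}\,du\,dv,
\]
and evaluating the inner integral as $\tfrac{2}{nv}(1 - (1 - v/2)^n)$ reduces this to $\tfrac{1}{2}\int_0^{1/2}\tfrac{1-(1-w)^n}{w}\,dw = \tfrac{1}{2}\ln n + O(1)$ by the same harmonic-number asymptotic that yields $E|\F| = H_n$, rescaled by the factor $1/2$ coming from the restriction to the lower-left subsquare.

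\textbf{Upper bound.} The trivial inclusion $\L \subseteq \F$ only yields coefficient $1$, so something stronger is needed. My plan is a union bound on $\{Y_1 \in \L\} = \bigcup_\lambda \{R_\lambda \text{ empty}\}$ over a bounded-size discretization of $\lambda$ based on the geometric regime of $R_\lambda$. As $\lambda$ sweeps $[0,\infty]$, the supporting line through $(a,b)$ exits the unit square through either both coordinate axes (triangular $R_\lambda$), the top edge $y = 1$ (trapezoidal), the right edge $x = 1$ (also trapezoidal), or both (pentagonal, where $A \ge 1/2$ so $(1-A)^{n-1}$ is exponentially small and negligible). Within each non-pentagonal regime $A(\lambda,a,b)$ attains a minimum at an interior $\lambda$, and taking that representative $\lambda_k$ per regime gives
\[
q_n(a,b) := \Pr(Y_1 \in \L \mid Y_1 = (a,b)) \le \sum_{k}(1 - A_k^{\min}(a,b))^{n-1} + o(1/n).
\]
Integrating over $(a,b) \in [0,1]^2$ and multiplying by $n$, the triangular regime contributes exactly $\tfrac{1}{2}\ln n + O(1)$ (matching the lower bound); the two trapezoidal regimes---where the minimum area is linear in $a$ or $b$ against the boundary rather than quadratic through the product $2ab$---each contribute an $O(\ln n)$ term whose coefficients sum to $\tfrac{1}{3}$, yielding the stated $\tfrac{5}{6}\ln n + O(1)$ total.

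\textbf{Main obstacle.} The lower bound is immediate from the single choice $\lambda^\ast = b/a$; the upper bound is where the real work lies. The two crucial steps are (i) justifying that a bounded-size discretization of $\lambda$ suffices for the union bound, so that no $\log n$ Bonferroni factor is lost---this requires monotonicity of $A(\lambda,a,b)$ within each regime---and (ii) evaluating the trapezoidal integrals explicitly to extract the additional $\tfrac{1}{3}\ln n$ beyond the triangular contribution. Without this refinement one cannot improve on the trivial $E|\L| \le E|\F| = H_n$, which would only give coefficient $1$.
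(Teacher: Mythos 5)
Your lower bound is correct and is essentially the paper's own argument: condition on $Y_1=(a,b)$, demand that the single triangle $R_{b/a}$ of area $2ab$ contain no other sample, and integrate $n(1-2ab)^{n-1}$ over $(0,\tfrac12]^2$ to get $\tfrac12\ln n+O(1)$.

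The upper bound has a genuine gap, located exactly at the step you flag as the main obstacle. Bounding $P\bigl(\bigcup_{\lambda}\{R_\lambda\text{ empty}\}\bigr)$ by a finite sum over one area-minimizing representative per geometric regime is not a valid union bound: minimizing $|R_\lambda|$ over a regime controls the individual probabilities, not the union, because $\{R_\lambda\text{ empty}\}\not\subseteq\{R_{\lambda_k}\text{ empty}\}$ unless $R_{\lambda_k}\subseteq R_\lambda$ as sets, which fails for distinct supporting lines through $(a,b)$. No monotonicity of the area function can repair this; you need containment of regions, not comparison of areas. The trapezoidal accounting is also unsubstantiated and appears inconsistent: a region whose area is bounded below by a quantity linear in a single coordinate (as the strips near $\lambda=0$ or $\lambda=\infty$ are, with area at least about $b$ or $a$) contributes only $O(1)$ after multiplying by $n$ and integrating, since the $\ln n$ in all of these computations comes from the product $ab$ in the exponent. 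Taken at face value your scheme would therefore output $\tfrac12\ln n+O(1)$ as an upper bound, forcing $E|\L|=\tfrac12\ln n+O(1)$, which contradicts the empirically observed fraction of roughly $0.7$ of the $\ln n$ Pareto points; this is a signal that the discretization genuinely loses probability mass. The repair, which is the paper's route, is a containment argument in $\lambda$ rather than a regime-by-regime minimization: for $\lambda\geq b/a$ one has $R_\lambda\cap\{u\leq a\}\supseteq R_{b/a}\cap\{u\leq a\}$, so emptiness of any $R_\lambda$ with $\lambda\geq b/a$ forces emptiness of the half-triangle $A=R_{b/a}\cap\{u<a,\;v>b\}$, and symmetrically emptiness of some $R_\lambda$ with $\lambda\leq b/a$ forces emptiness of $B=R_{b/a}\cap\{u>a,\;v<b\}$. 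Hence $\{(a,b)\in\L\}\subseteq\{A\text{ empty}\}\cup\{B\text{ empty}\}$ with $|A|=|B|=\tfrac12ab$, and inclusion--exclusion (not a union bound) gives
\[
n\int_0^{1/2}\!\!\int_0^{1/2}\Bigl(2\bigl(1-\tfrac32ab\bigr)^{n-1}-(1-2ab)^{n-1}\Bigr)\,da\,db=\tfrac43\ln n-\tfrac12\ln n+O(1)=\tfrac56\ln n+O(1).
\]
Without this containment step your proposal cannot improve on the trivial $E|\L|\leq E|\F|=\ln n+O(1)$.
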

\begin{figure}[tp]
\begin{center}
   \mbox{
  \includegraphics[height=0.16\textheight]{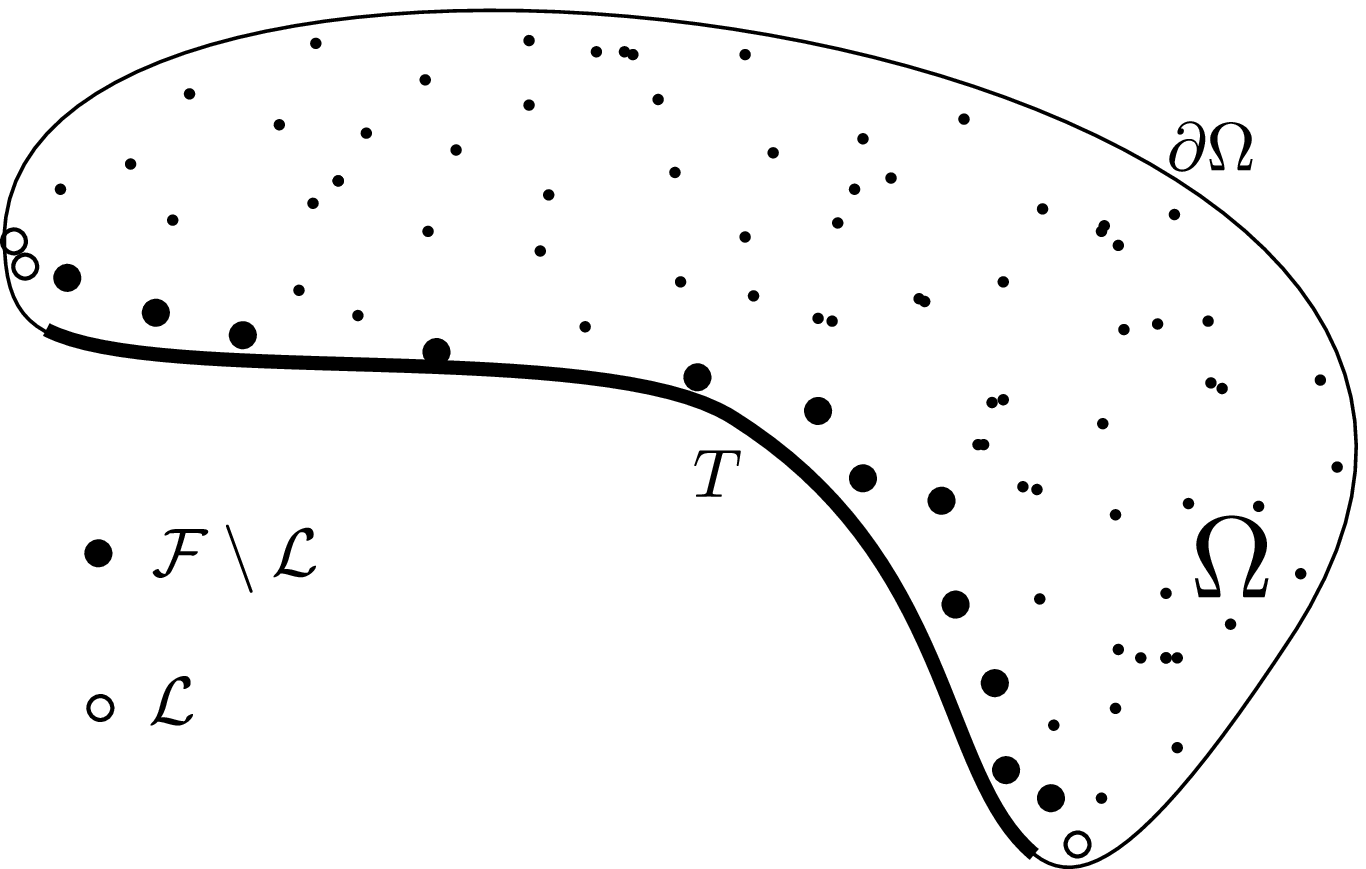} 
  \includegraphics[trim = 25 17 25 25, clip=true,height=0.16\textheight]{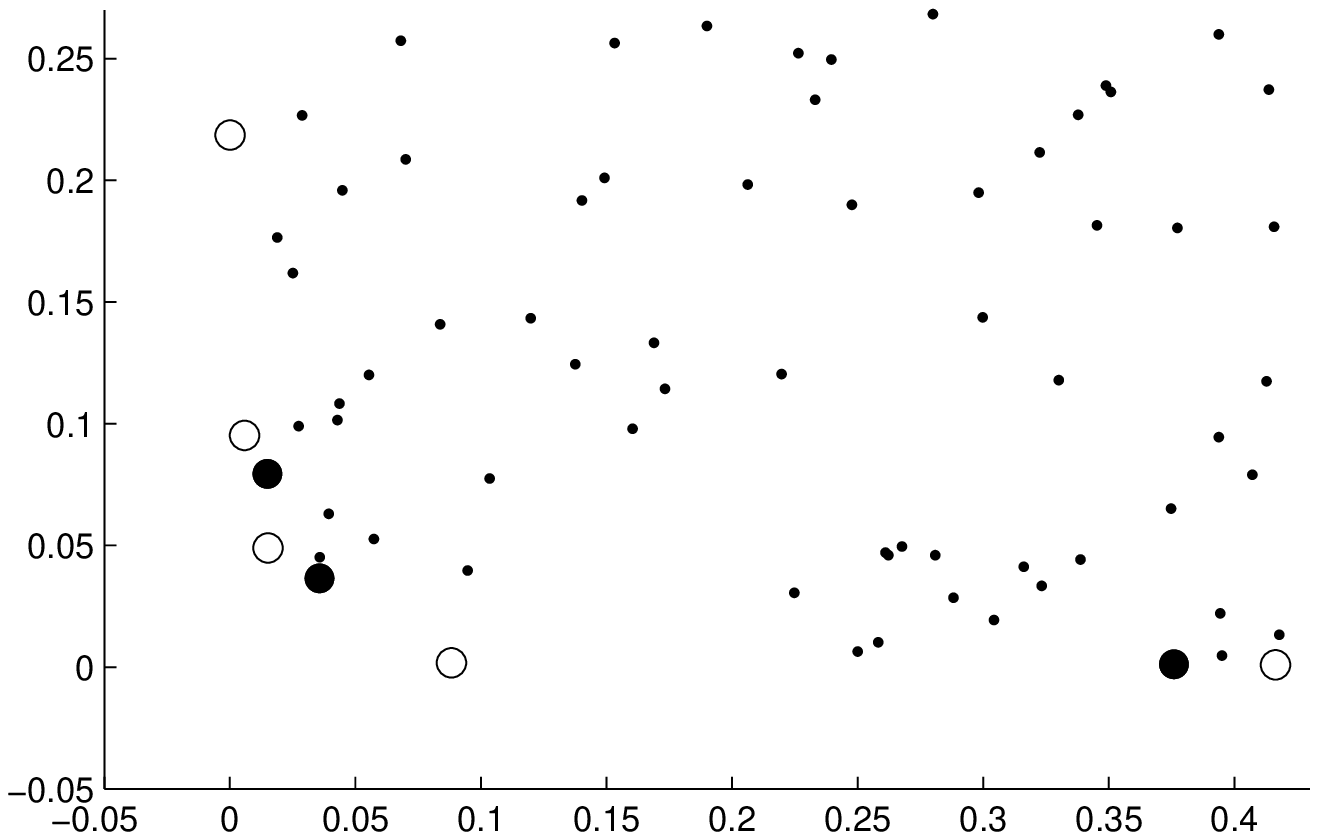}
  }
  \caption{\emph{Left:} Non-convexities in the Pareto front induced by the geometry of the domain $\Omega$ (Theorem \ref{thm:asym}). \emph{Right:} Non-convexities due to randomness in the samples (Theorem \ref{thm:small}). In each case, the larger points are Pareto-optimal, and the large black points \emph{cannot} be obtained by scalarization.  }
  \label{fig:theory}
  \end{center}
\vspace{-0.5cm}
\end{figure}
The proof of Theorem \ref{thm:small} is also postponed to Appendix \ref{proofs}.  A proof that $E|\F| = \ln n + O(1)$ as $n\to \infty$ can be found in~\cite{nielsen1966}.  Hence Theorem \ref{thm:small} shows that, asymptotically and in expectation, only between $\frac{1}{2}$ and $\frac{5}{6}$ of the Pareto-optimal points can be obtained by linear scalarization in the Pareto box problem.  Experimentally, we have observed that the true fraction of points is close to $0.7$.   This means that at least $\frac{1}{6}$ (and likely more) of the Pareto points can only be obtained via Pareto methods even when $\Omega$ is convex.  
Figure \ref{fig:theory} gives an example of the sets $\F$ and $\L$ from the two theorems.  

\section{Multi-criteria anomaly detection}
\label{detection}

Assume that a training set $\mathcal{X}_N=\{X_1,\ldots,X_N\}$ of 
nominal data samples is available. 
Given a test sample $X$, the objective of anomaly detection is to declare $X$ to 
be an anomaly if $X$ is significantly different from samples in $\mathcal{X}_N$. 
Suppose that $K>1$ different evaluation criteria are given. 
Each criterion is associated with a measure for computing 
dissimilarities.  
Denote the dissimilarity between $X_i$ and $X_j$ computed using the measure 
corresponding to the $l$th criterion by $d_l(i,j)$. 

We define a \emph{dyad} by $D_{ij}=[d_1(i,j), \ldots, d_K(i,j)]^T \in 
\mathbb{R}_{+}^K, i \in \{1,\ldots,N\}, j \in \{1,\ldots,N\} \setminus i$. 
Each dyad $D_{ij}$ corresponds to a connection between samples $X_i$ and $X_j$. 
Therefore, there are in total ${N\choose 2}$ different dyads. 
For convenience, denote the set of all dyads by $\Dset$ and the space 
of all dyads $\mathbb{R}^K_+$ by $\Dspace$.
By the definition of strict dominance in Section \ref{prealg}, a dyad 
$D_{ij}$ 
strictly dominates another dyad $D_{i^* j^*}$ if $d_l(i,j) \leq d_l(i^*,j^*)$ 
for all $l \in \{1, 
\ldots, K\}$ and $d_l(i,j) < d_l(i^*,j^*)$ for some $l$. 
The first Pareto front $\F_1$ corresponds to the set of dyads from $\Dset$ that 
are not strictly dominated by any other dyads from $\Dset$. 
The second Pareto front $\F_2$ corresponds to the set of dyads from 
$\Dset \setminus \F_1$ that are not strictly dominated by any other dyads from 
$\Dset \setminus \F_1$, and so on, as defined in Section \ref{prealg}. 
Recall that we refer to $\F_i$ as a \emph{deeper} front than $\F_j$ if $i>j$. 

\subsection{Pareto fronts of dyads}
\label{PDA_anomaly}
For each sample $X_n$, there are $N-1$ dyads corresponding to its connections 
with the other $N-1$ samples. 
Define the set of $N-1$ dyads associated with $X_n$ by $\Dset^n$. 
If most dyads in $\Dset^n$ are located at shallow Pareto fronts, then the 
dissimilarities between $X_n$ and the other $N-1$ samples are small under 
\emph{some} combination of the criteria. 
Thus, $X_n$ is likely to be a nominal sample. 
This is the basic idea of the proposed multi-criteria anomaly detection 
method using PDA. 

We construct Pareto fronts $\F_1,\ldots,\F_M$ of the dyads from the training 
set, where the total number of fronts $M$ 
is the required number of fronts such that each dyad is a member of a front. 
When a test sample $X$ is obtained, we create new dyads corresponding to 
connections between $X$ and training samples, as illustrated in Figure \ref{bi_sample}. 
Similar to many other anomaly detection methods, 
we connect each test sample to its $k$ nearest neighbors. 
$k$ could be different for each criterion, so we denote $k_i$ as the choice 
of $k$ for criterion $i$. 
We create $s = \sum_{i=1}^K k_i$ new dyads, which we denote by the set 
$\Dset^\text{new} = 
\{D_1^\text{new}, D_2^\text{new}, \ldots, D_s^\text{new}\}$, corresponding to 
the connections between $X$ and the union of the $k_i$ nearest neighbors in each 
criterion $i$. 
In other words, we create a dyad between $X$ and $X_j$ if $X_j$ is among the 
$k_i$ nearest neighbors\footnote{If a training sample is one of the $k_i$ nearest neighbors 
in multiple criteria, then multiple copies of the dyad corresponding to the connection 
between the test sample and the training sample are created.} of $X$ in any criterion $i$. 
We say that $D_i^\text{new}$ is \emph{below} a front $\F_l$ if 
$D_i^\text{new} \succ D_l \text{ for some } D_l \in \F_l$,
i.e.~$D_i^\text{new}$ strictly dominates at least a single dyad in $\F_l$. 
Define the depth of $D_i^{\text{new}}$ by 
\begin{equation*}
e_i=\min\{l\,|\,D_i^\text{new} \text{ is below } \F_l\}.
\end{equation*}
Therefore if $e_i$ is large, then $D^\text{new}_i$ will be near deep fronts, and the distance between $X$ and the corresponding training sample is 
large under \emph{all} combinations of the $K$ criteria. 
If $e_i$ is small, then $D^\text{new}_i$ will be near shallow fronts, so the 
distance between $X$ and the corresponding training sample is small under 
\emph{some} combination of the $K$ criteria. 

\subsection{Anomaly detection using depths of dyads}
\label{score}
In k-NN based anomaly detection algorithms such as those mentioned in 
Section \ref{related}, the \emph{anomaly score} is a function of the $k$ 
nearest neighbors to a test sample. 
With multiple criteria, one could define an anomaly score by 
scalarization. 
From the probabilistic properties of Pareto fronts discussed in Section \ref{theory}, we know that Pareto methods identify more Pareto-optimal points than linear scalarization methods and significantly more Pareto-optimal points than a single weight for scalarization\footnote{Theorems \ref{thm:asym} and \ref{thm:small} require i.i.d.~samples, but dyads are not independent. However, there are $O(N^2)$ dyads, and each dyad is only dependent on $O(N)$ other dyads.  This suggests that the theorems should also hold for the non-i.i.d.~dyads as well, and it is supported by experimental results presented in Appendix \ref{exp_sup}.}. 

This motivates us to develop a \emph{multi-criteria anomaly score} using Pareto fronts.
We start with the observation from Figure \ref{bi_sample} that dyads 
corresponding to a nominal test sample are typically located near shallower
fronts than dyads corresponding to an anomalous test sample. 
Each test sample is associated with $s$ new dyads, where the $i$th dyad 
$D_i^\text{new}$ has depth $e_i$. 
For each test sample $X$, we define the anomaly score $v(X)$ to be the mean of the $e_i$'s, which corresponds to the average depth of the $s$ dyads associated with $X$. Thus the anomaly score can be easily computed and compared to the decision 
threshold $\sigma$ using the test
$$
v(X) = \frac{1}{s} \sum_{i=1}^s e_i \overset{H_1}{\underset{H_0}{\gtrless}} \sigma.
$$

\begin{algorithm}[t]
\caption{PDA anomaly detection algorithm.}
\label{alg}
Training phase:
\begin{algorithmic}[1]
	\FOR {$l = 1 \to K$}
		\STATE {Calculate pairwise dissimilarities $d_l(i,j)$ between all 
		training samples $X_i$ and $X_j$}
	\ENDFOR
	\STATE {Create dyads $D_{ij} = [d_1(i,j), \ldots, d_K(i,j)]$ for all 
	training samples}
	\STATE {Construct Pareto fronts on set of all dyads until each dyad is 
	in a front}
\end{algorithmic}
Testing phase:
\begin{algorithmic}[1]
	\STATE {$nb \leftarrow [\,]$} \COMMENT{empty list}
	\FOR {$l = 1 \to K$}
		\STATE {Calculate dissimilarities between test sample $X$ 
		and all training samples in criterion $l$}
		\STATE {$nb_l \leftarrow k_l$ nearest neighbors of $X$}
		\STATE {$nb \leftarrow [nb,nb_l]$} \COMMENT{append neighbors to list}
	\ENDFOR
	\STATE {Create $s$ new dyads $D_i^\text{new}$ between $X$ and training 
	samples in $nb$}
	\FOR {$i = 1 \to s$}
		\STATE {Calculate depth $e_i$ of $D_i^\text{new}$}
	\ENDFOR
	\STATE {Declare $X$ an anomaly if $v(X) = (1/s) \sum_{i=1}^s e_i > \sigma$}	
\end{algorithmic}
\end{algorithm}

Pseudocode for the PDA anomaly detector is shown in Algorithm 
\ref{alg}. 
In Appendix \ref{implement} we provide details of the implementation as well as an analysis of the time complexity and a heuristic for choosing the $k_i$'s that performs well in practice.
Both the training time and the time required to test a 
new sample using PDA are \emph{linear} in the number of criteria $K$. 
To handle multiple criteria, other anomaly detection methods, such as the 
ones mentioned in Section \ref{related}, need to be re-executed multiple 
times using different (non-negative) linear combinations of the $K$ criteria. 
If a grid search is used for selection of the weights in the linear 
combination, then the required computation time would be exponential in $K$. 
Such an approach presents a computational problem unless $K$ is very small. 
Since PDA scales \emph{linearly} with $K$, it does not encounter this problem. 

\section{Experiments}
\label{exp}
We compare the PDA method with four other nearest neighbor-based 
single-criterion anomaly 
detection algorithms mentioned in Section \ref{related}. 
For these methods, we use linear combinations of the criteria with 
different weights selected by grid search to compare performance with PDA. 

\subsection{Simulated data with four criteria}
First we present an experiment on a simulated data set.
The nominal distribution is given by the uniform distribution on 
the hypercube $[0,1]^4$. 
The anomalous samples are located just outside of this hypercube. 
There are four classes of anomalous distributions. 
Each class differs from 
the nominal distribution in one of the four dimensions; the distribution 
in the anomalous dimension is uniform on $[1,1.1]$. 
We draw $300$ training samples from the nominal distribution followed by 
$100$ test samples from a mixture of the nominal and anomalous 
distributions with a $0.05$ probability of selecting any particular 
anomalous distribution. 
The four criteria for this experiment correspond to the squared differences 
in each dimension. 
If the criteria are combined using linear combinations, 
the combined dissimilarity measure reduces to weighted squared Euclidean 
distance.

The different methods are evaluated using the receiver operating 
characteristic (ROC) curve and the area under the curve (AUC). 
The mean AUCs (with standard errors) over $100$ simulation runs are shown 
in Table \subref*{auc_table_sim}. 
A grid of six points between $0$ and $1$ in each criterion, 
corresponding to $6^4=1296$ different sets of weights, is used to 
select linear combinations for the single-criterion methods. 
Note that PDA is the best performer, outperforming even the best linear combination. 

\begin{table}[t]
\renewcommand{\arraystretch}{1.1}
\caption{AUC comparison of different methods for both experiments.
Best AUC is shown in {\bf bold}. 
PDA does not require selecting weights so it has a single AUC. 
The median and best AUCs (over all choices of weights selected by grid search) are 
shown for the other four methods. 
\emph{PDA outperforms all of the other methods, even for the best 
weights, which are not known in advance.}} 
\begin{center}
\subfloat[Four-criteria simulation ($\pm$ standard error)]{
\label{auc_table_sim}
\begin{tabular}{ccc}
\hline
\multirow{2}{*}{Method} & \multicolumn{2}{c}{AUC by weight}\\
& Median & Best\\
\hline
PDA & \multicolumn{2}{c}{\bf{0.948 $\pm$ 0.002}}\\
k-NN       &0.848 $\pm$ 0.004&    0.919 $\pm$ 0.003\\
k-NN sum   &0.854 $\pm$ 0.003&    0.916 $\pm$ 0.003\\
k-LPE     &0.847 $\pm$ 0.004&    0.919 $\pm$ 0.003\\
LOF       &0.845 $\pm$ 0.003&    0.932 $\pm$ 0.003\\
\hline
\end{tabular}
}
\qquad\qquad
\subfloat[Pedestrian trajectories]{
\label{auc_table}
\begin{tabular}{ccc}
\hline
\multirow{2}{*}{Method}& \multicolumn{2}{c}{AUC by weight}\\
 & Median & Best\\
\hline
PDA & \multicolumn{2}{c}{\bf{0.915}}\\
k-NN       &0.883&    0.906\\
k-NN sum   &0.894&    0.911\\
k-LPE     &0.893&    0.908\\
LOF       &0.839&    0.863\\
\hline
\end{tabular}
}
\end{center}
\end{table}

\subsection{Pedestrian trajectories}
\label{traj}
We now present an experiment on a real data set that contains thousands of 
pedestrians' trajectories in an open area monitored by a video camera \citep{Majecka2009}. 
Each trajectory is approximated by a cubic spline curve 
with seven control points \citep{Sillito2009}. 
We represent a trajectory with $l$ time samples by
$$T = \begin{bmatrix}
x_1 & x_2 & \ldots & x_l \\
y_1 & y_2 & \ldots & y_l
\end{bmatrix},$$
where $[x_t,y_t]$ denote a pedestrian's position at time step $t$. 

We use two criteria for computing the dissimilarity between trajectories. 
The first criterion is to compute the dissimilarity in \emph{walking speed}. 
We compute the instantaneous speed at all time steps along each trajectory 
by finite differencing, i.e.~the speed of trajectory $T$ at time step $t$ 
is given by $\sqrt{(x_t - x_{t-1})^2 + (y_t - y_{t-1})^2}$. 
A histogram of speeds for each trajectory is obtained in this manner. 
We take the dissimilarity between two trajectories to be the squared 
Euclidean distance between their speed histograms. 
The second criterion is to compute the dissimilarity in \emph{shape}. 
For each trajectory, we select $100$ points, uniformly positioned along the 
trajectory. 
The dissimilarity between two trajectories $T$ and $T'$ is then given by the 
sum of squared Euclidean distances between the positions of $T$ and $T'$ 
over all $100$ points. 

\begin{figure}[t]
\begin{center}
  \mbox{
  \includegraphics[width=6.8 cm]{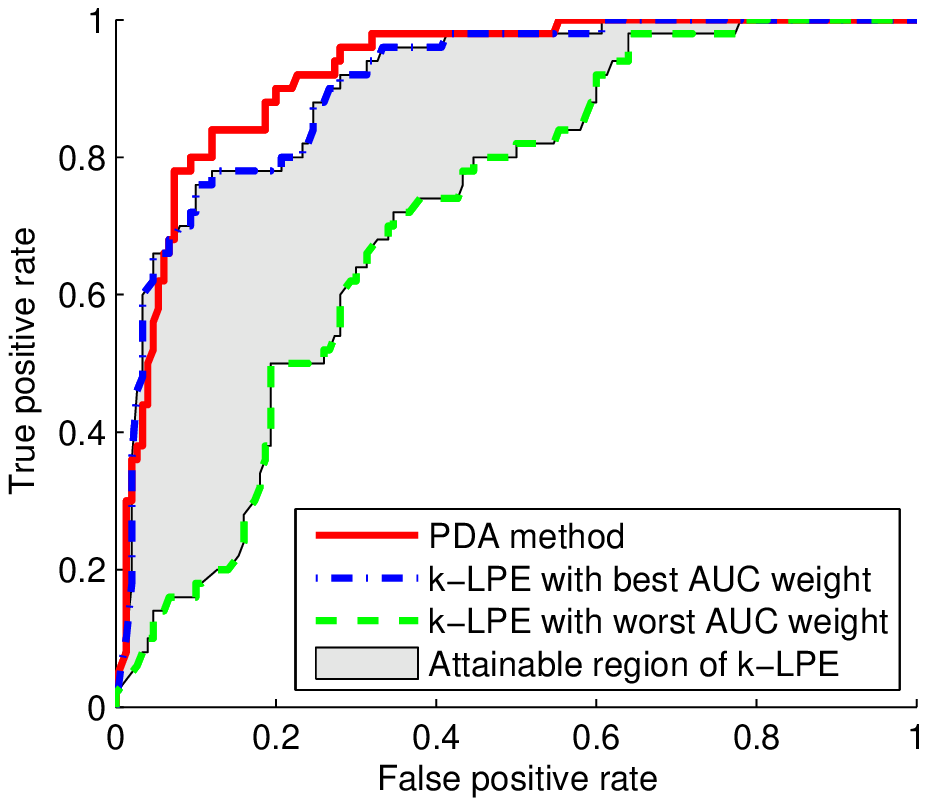}
  \includegraphics[width=6.8 cm]{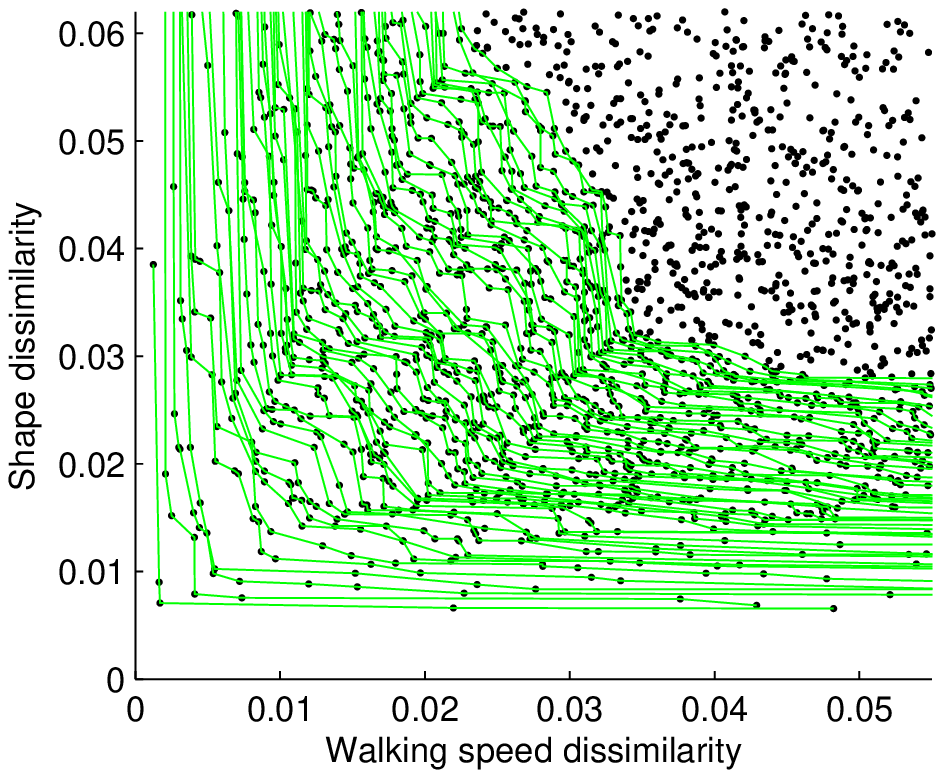}
  }
  \caption{\emph{Left:} ROC curves for PDA and attainable region for k-LPE over 
  $100$ choices of weights. 
  PDA outperforms k-LPE even under the best choice of weights. \emph{Right}: A subset of the dyads for the training samples along with the first $100$ Pareto fronts. The fronts are highly non-convex, partially explaining the superior performance of PDA.}
  \label{roc_curves}
  \end{center}
\end{figure}

The training sample for this experiment consists 
of $500$ trajectories, and the test sample consists of $200$ trajectories. 
Table \subref*{auc_table} shows the performance of PDA as compared to the other 
algorithms using $100$ uniformly spaced weights for linear combinations. 
Notice that PDA has higher AUC than the other methods under \emph{all} choices of 
weights for the two criteria. 
For a more detailed comparison, the ROC curve for PDA and the 
attainable region for k-LPE (the region between the ROC curves corresponding 
to weights resulting in the best and worst AUCs) is shown in Figure \ref{roc_curves} along with the first $100$ Pareto fronts for PDA. 
k-LPE performs slightly better at low false positive rate when the 
best weights are used, but PDA performs better in all other situations, resulting 
in higher AUC. 
Additional discussion on this experiment can be found in Appendix 
\ref{traj_add}. 

\section{Conclusion}
\label{con}
In this paper we proposed a new multi-criteria anomaly detection method. 
The proposed method uses Pareto depth analysis to compute the anomaly score of 
a test sample by examining the Pareto front depths of dyads corresponding to the 
test sample. 
Dyads corresponding to an anomalous sample tended to be located at deeper fronts 
compared to dyads corresponding to a nominal sample. 
Instead of choosing a specific weighting or performing a grid search on the 
weights for different dissimilarity measures, the proposed method can 
efficiently detect anomalies in a manner that scales linearly in the 
number of criteria. 
We also provided a theorem establishing that the Pareto approach is asymptotically better than using linear combinations of criteria. Numerical studies validated our theoretical predictions of PDA's performance advantages on simulated and real data.

\subsubsection*{Acknowledgments}
We thank Zhaoshi Meng for his assistance in labeling the pedestrian 
trajectories.
We also thank Daniel DeWoskin for suggesting a fast algorithm for computing 
Pareto fronts in two criteria.
This work was supported in part by ARO grant W911NF-09-1-0310.

\appendixtitleon
\appendixtitletocon
\begin{appendices}

\section{Proofs of Theorems \ref{thm:asym} and \ref{thm:small}}
\label{proofs}

Before presenting the proofs of Theorems \ref{thm:asym} and \ref{thm:small} we need a preliminary result.

\begin{lemma}\label{lem:num_pts}
For any $n\geq 1$ and $A \subset \R^d$ measurable, we have
\begin{equation}\label{eq:num_pts}
E |\F_A| = n \int_A f(x) \left(1-\int_{y \preceq x} f(y) dy\right)^{n-1} dx.
\end{equation}
\end{lemma}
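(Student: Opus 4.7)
\medskip

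\noindent\textbf{Proof plan.}
The plan is to expand $|\F_A|$ as a sum of indicators, apply linearity of expectation together with the fact that $Y_1,\dots,Y_n$ are i.i.d., and then compute the resulting probability by conditioning on one of the samples.

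First I would write
\[ |\F_A| = \sum_{i=1}^n \mathbf{1}\{Y_i \in A\}\,\mathbf{1}\{Y_i \in \F\}, \]
where the second indicator is the event that no other $Y_j$ ($j \neq i$) strictly dominates $Y_i$. By symmetry of the joint distribution of $(Y_1,\dots,Y_n)$ under permutations, each term in the sum has the same expectation, so
\[ E|\F_A| = n \, P\bigl(Y_1 \in A \text{ and no } Y_j,\ j\geq 2,\ \text{strictly dominates } Y_1\bigr). \]

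Next I would condition on $Y_1 = x$. Given $Y_1 = x$, the variables $Y_2,\dots,Y_n$ are still i.i.d.\ with density $f$, and the events ``$Y_j$ strictly dominates $x$'' for $j=2,\dots,n$ are mutually independent. For a single $j$, the event ``$Y_j$ strictly dominates $x$'' means $Y_j \preceq x$ coordinatewise with strict inequality in at least one coordinate; since $f$ is a density, the probability that $Y_j$ agrees with $x$ in any given coordinate is zero, so up to a null event this event coincides with $\{Y_j \preceq x\}$, whose probability equals $\int_{y\preceq x} f(y)\,dy$. Hence
\[ P\bigl(\text{no } Y_j,\ j\geq 2,\ \text{strictly dominates } Y_1 \,\big|\, Y_1=x\bigr) = \left(1 - \int_{y\preceq x} f(y)\,dy\right)^{n-1}. \]

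Finally I would integrate against the marginal density of $Y_1$ restricted to $A$:
\[ P\bigl(Y_1 \in A,\ Y_1 \in \F\bigr) = \int_A f(x) \left(1 - \int_{y\preceq x} f(y)\,dy\right)^{n-1} dx, \]
and multiply by $n$ to obtain \eqref{eq:num_pts}. There is no real obstacle here; the only subtlety worth stating explicitly is the null-set argument that lets one replace ``strictly dominates'' with the simpler componentwise ordering $\preceq$ inside the probability computation, which is what makes the integrand take the clean form $\int_{y\preceq x} f(y)\,dy$.
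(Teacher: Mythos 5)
Your proposal is correct and follows essentially the same route as the paper's proof: exchangeability gives $E|\F_A| = nP(Y_1 \in \F_A)$, conditioning on $Y_1 = x$ and using the conditional independence of $Y_2,\dots,Y_n$ yields the factor $\left(1-\int_{y\preceq x} f(y)\,dy\right)^{n-1}$, and integrating over $A$ finishes the argument. Your explicit null-set remark justifying the replacement of ``strictly dominates'' by $\preceq$ is a detail the paper leaves implicit, but it does not change the approach.
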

\begin{proof}
Since $Y_1,\dots,Y_n$ are i.i.d, we have $E|\F_A| = nP(Y_1 \in \F)$.
Conditioning on $Y_1$ we obtain $E|\F_A| =n \int_{\R^d} f(x) P(Y_1 \in \F \, | \, Y_1=x) dx$.
The proof is completed by noting that
\[P(Y_1 \in \F \, | \, Y_1=x) = \begin{cases}
\left(1-\int_{y \preceq x} f(y) dy\right)^{n-1},& x \in A,\\
0,& x \not\in A. 
\end{cases} \]
\end{proof}

\begin{proof}[Proof of Theorem \ref{thm:asym}]
By selecting $h>0$ smaller, if necessary, we can write \eqref{eq:num_pts} as
\begin{equation}\label{eq:lema}
E|\F_{T_h}| = \int_{T} \int_0^h nf(x)\left(1-\int_{y \preceq x} f dy \right)^{n-1}(1+ O(t)) dt dz,
\end{equation}
where $x = z + t\nu(z)$ for $z \in T$. Since $\partial \Omega$ is smooth, we can approximate $T$ near $z$ by a hyperplane with normal $\nu(z)$.  By the assumption that $\{y \in \overline{\Omega} \, : \, y \preceq x\} = \{x\}$ we can make $h>0$ smaller, if neceessary, so that $\{ y \in \Omega \, : \, y \preceq x\}$ is approximately a simplex with side lengths $t/\nu_i(z)$.  Hence
\begin{eqnarray*}
\int_{y \preceq x} f(y) dy &=& (f(z) + O(t/\delta))\int_{y \preceq x} dy \\
&=& \frac{f(z)t^d}{d!\nu_1(z)\cdots\nu_d(z)} + O\left(\frac{t^{d+1}}{\delta^{d+1}}\right).
\end{eqnarray*}
Substituting this into \eqref{eq:lema}, we have
\begin{equation}\label{eq:lemb}
E|\F_{T_h}| = \int_T\int_0^h n(f(z) + O(t))\left(1-\frac{f(z)t^d}{d!\nu_1(z)\cdots\nu_d(z)} + O(t^{d+1}/\delta^{d+1})\right)^{n-1} dt dz.
\end{equation}
We can now do an asymptotic analysis of the inner integral which is a special case of the general equation
\[A_n := \int_0^h t^\lambda (1-at^d + O(bt^{d+1}))^{n-1} dt, \ \ \lambda \in [0,1], a,b>0. \]
Making the change of variables $-s = (n-1)\ln(1-at^d + O(bt^{d+1}))$ and simplifying, we obtain
\[A_n = \frac{1}{(a(n-1))^\frac{1+\lambda}{d}} \int_0^{P(n-1)} \left(\frac{1}{d} s^{\frac{1+\lambda}{d} - 1} + \frac{b}{(n-1)^\frac{1}{d}} O(s^{\frac{2+\lambda}{d} -1}) \right) e^{-s} ds,\]
where
\[P = -\ln(1-ah^d + bO(h^{d+1})).\]
We can, of course, choose $h$ small enough so that $P$ is finite and positive.
Recalling the definition of the Gamma function, $\Gamma(z) = \int_0^\infty t^{z-1} e^{-t} dt$, we see that
\[A_n = \frac{\Gamma\left(\frac{1+\lambda}{d}\right)}{d (an)^\frac{1+\lambda}{d}} + O\left(\frac{b}{n^\frac{2+\lambda}{d}}\right). \]
Note that we are keeping track of $O(b)$ terms because $b=O(1/\delta^{d+1})$ may become large at different points of $T$, whereas $O(1/a)$ is uniformly bounded independent of $\delta$ along $T$. 
Applying this to \eqref{eq:lemb} with 
\[a = \frac{f(z)}{d!\nu_1(z)\cdots\nu_d(z)}, \ \ {\rm and} \ \ b = \delta^{-(d+1)},\]
completes the proof.
\end{proof}
\vspace{6pt}

\begin{proof}[Proof of Theorem \ref{thm:small}]
Since $Y_1,\dots,Y_n$ are i.i.d., we have $E|\L| = nP(Y_1 \in \L)$.  For $(x,y) \in [0,1]^2$ let $D_{x,y}$ be the event that $Y_1=(x,y)$ and $(x,y) \in \F$.  Conditioning on $D_{x,y}$ we have
\begin{align}\label{eq:small}
E|\L| &{}={} n\int_0^1 \int_0^1 (1-xy)^{n-1} P((x,y) \in \L \, | \, D_{x,y})\, dxdy \notag \\
&{}={} n\int_0^\frac{1}{2} \int_0^\frac{1}{2}(1-xy)^{n-1} P((x,y) \in \L \, | \, D_{x,y})\, dxdy + O(1).
\end{align}
Define
\[A = \left\{(u,v) \in [0,1]^2 \; | \; 0 <u<x, \;  y < v < 2y -\frac{uy}{x}\right\},\]
and
\[B = \left\{(u,v) \in [0,1]^2 \; | \; x < u <1, \;  0 < v < 2y -\frac{uy}{x}\right\}.\]
Let $E$ be the event that $A$ and $B$ each contain at least one sample from $Y_2,\dots,Y_n$.  If $E$ occurs, then $(x,y)$ is in the interior of the convex hull of $\F$ and hence $(x,y) \not\in \L$.  Let $F$ denote the event that none of the samples from $Y_2,\dots,Y_n$ fall in $A\cup B$.  If $F$ occurs, then we clearly have $(x,y) \in \L$.  It follows that
\[P(F \, | \, D_{x,y}) \leq P((x,y) \in \L \, | \, D_{x,y}) \leq P(E^c \, | \,D_{x,y}).\]
Conditioned on $D_{x,y}$, the samples $Y_2,\dots, Y_n$ remain independent.  The conditional density function of each remaining sample is $f_{Y_i\, | \, D_{x,y}}(u,v) = \frac{1}{1-xy}$.  Let $E_A$ (resp.~$E_B$) denote the event that no samples from $Y_2,\dots,Y_n$ are drawn from $A$ (resp.~$B$).  Then $E^c = E_A\cup E_B$ and $F = E_A \cap E_B$.  Noting that $|A|=|B|=\frac{1}{2}xy$, we see that
\begin{eqnarray*}
P(E^c \, | \, D_{x,y}) &=& P(E_A \, | \, D_{x,y})+ P(E_B \, | \, D_{x,y}) - P(E_A\cap E_B \, | \, D_{x,y}) \\
&=&2\left( 1 - \frac{xy}{2(1-xy)}\right)^{n-1} - \left(1-\frac{xy}{1-xy}\right)^{n-1},
\end{eqnarray*}
and
\[P(F \, | \, D_{x,y}) = P(E_A \cap E_B \, | \, D_{x,y}) = \left(1-\frac{xy}{1-xy}\right)^{n-1}.\]
Substituting this into \eqref{eq:small}, we obtain
\[E|\L| \leq n\int_0^\frac{1}{2} \int_0^\frac{1}{2} 2\left(1-\frac{3}{2}xy\right)^{n-1} - (1-2xy)^{n-1} \, dxdy,\]
and
\[E|\L| \geq n\int_0^\frac{1}{2} \int_0^\frac{1}{2} (1-2xy)^{n-1} \, dxdy.\]
A short calculation (change variables to $u=anxy$ and $v=x$) shows that
\[\int_0^\frac{1}{2} \int_0^\frac{1}{2}n(1-axy)^{n-1} dxdy = \frac{1}{a} \ln n + O(1).\]
Applying this result to the bounds above completes the proof.
\end{proof}

\section{Experimental support for Theorems \ref{thm:asym} and \ref{thm:small}}
\label{exp_sup}

Independence of $Y_1,\dots,Y_n$ is built into the assumptions of Theorems \ref{thm:asym} and \ref{thm:small}, but it is clear that dyads (as constructed in Section \ref{detection}) are not independent.  Each dyad $D_{i,j}$ represents a connection between two independent samples $X_i$ and $X_j$.  For a given dyad $D_{i,j}$, there are $2(N-2)$ corresponding dyads involving $X_i$ or $X_j$ and these are clearly not independent from $D_{i,j}$.  However, all other dyads are independent from $D_{i,j}$.  So while there are $O(N^2)$ dyads, each dyad is independent from all other dyads except for a set of size $O(N)$.  Since Theorems \ref{thm:asym} and \ref{thm:small} deal with asymptotic results, this suggests they should hold for the dyads even though they are not i.i.d.  In this section we present some experimental results that support this non-rigorous statement.

\begin{figure}[t]
\centering
\subfloat[Criteria $|\Delta x|$,$|\Delta y|$]{
\includegraphics[width=0.45\textwidth]{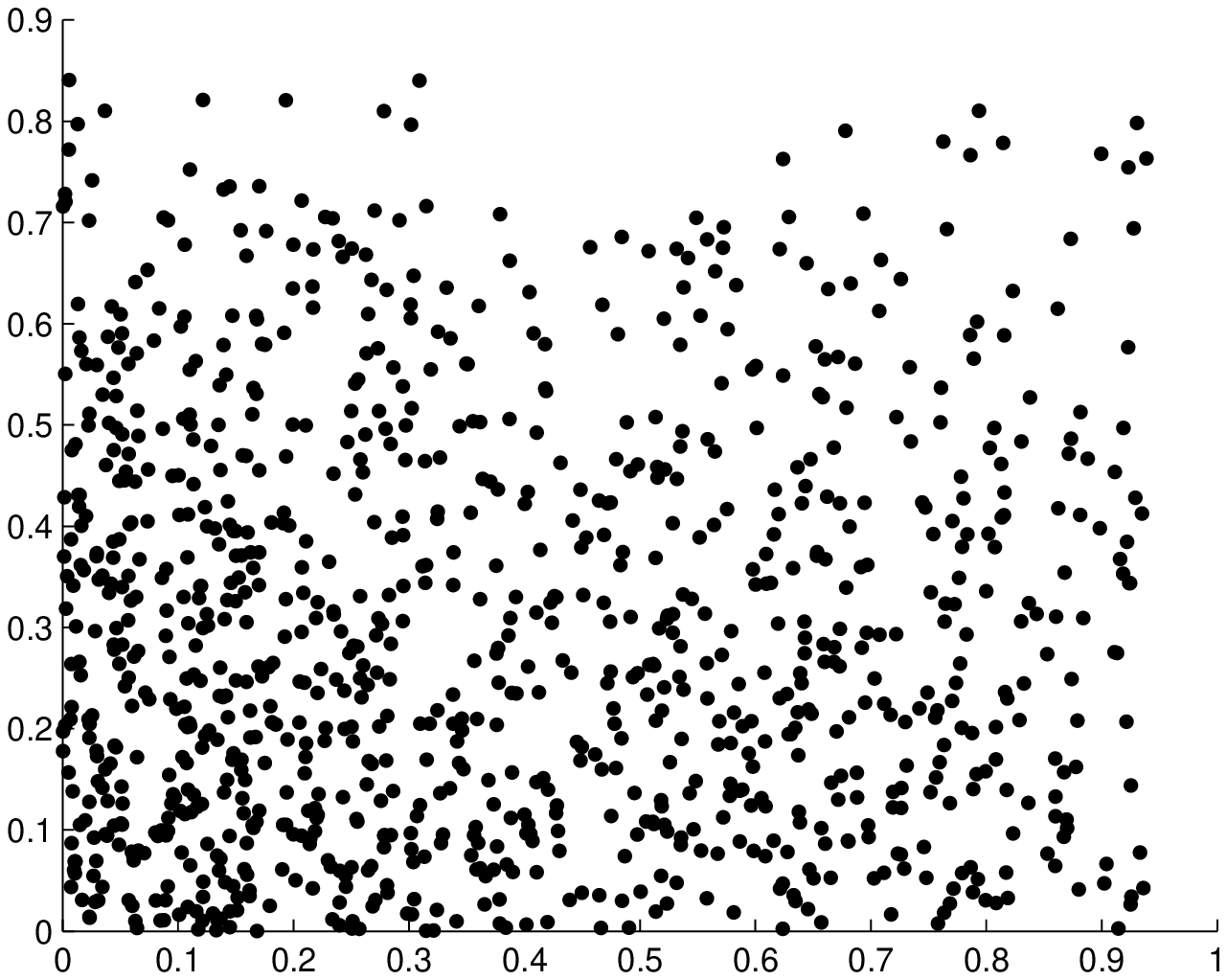}
\label{dyad_samplesa}
}
\subfloat[Criteria $|\Delta x| + |\Delta y|$,$|\Delta x| - |\Delta y|$]{
\includegraphics[width=0.45\textwidth]{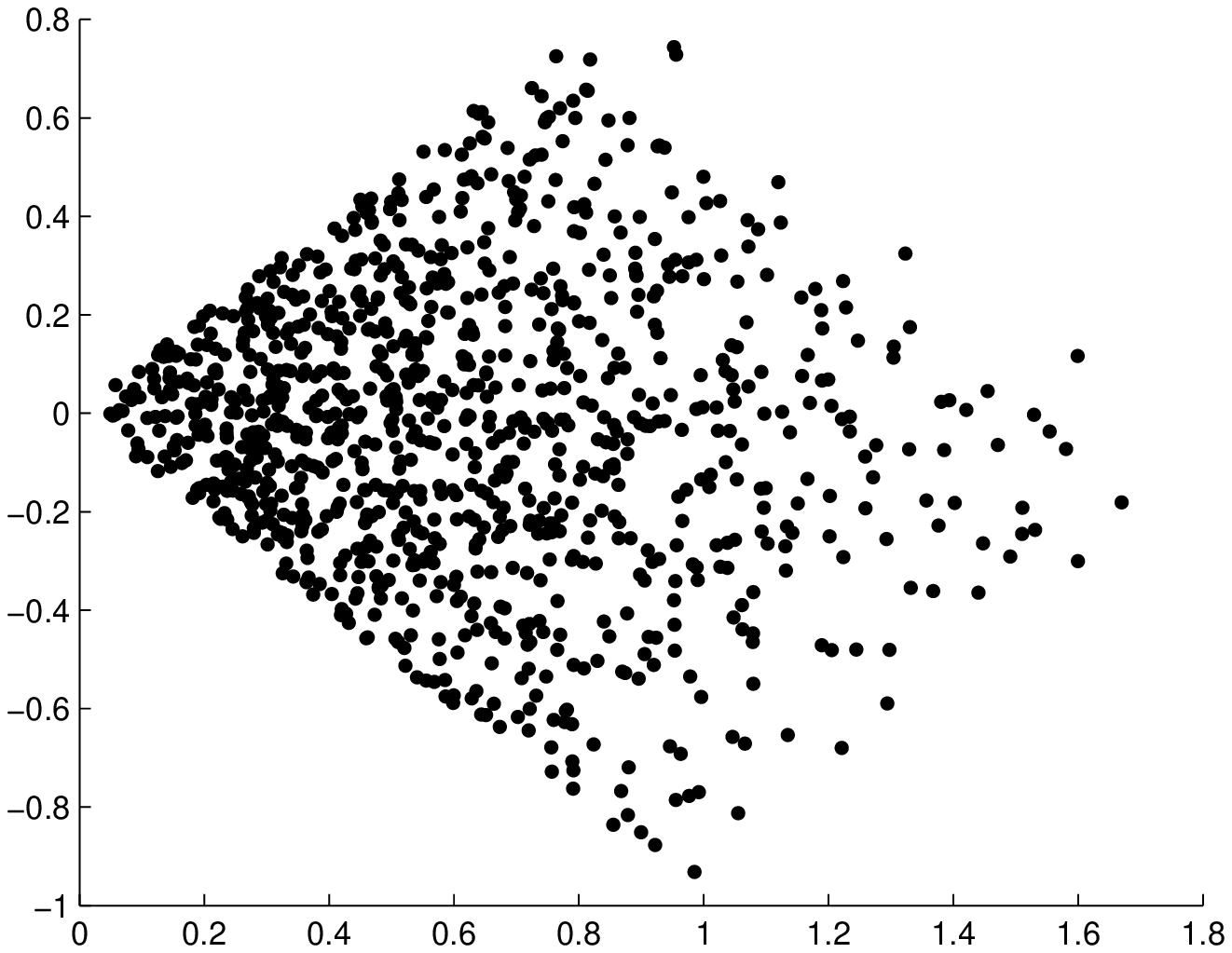}
\label{dyad_samplesb}
}
\caption{$990$ dyads constructed with two different sets of criteria from $45$ samples uniformly distributed in $[0,1]^2$.} \label{dyad_samples}
\end{figure}

We first drew samples uniformly in $[0,1]^2$ and computed the dyads corresponding to the two criteria $|\Delta x|$ and $|\Delta y|$, which denote the absolute differences between the $x$ and $y$ coordinates, respectively.  The domain of the resulting dyads is again the box $[0,1]^2$, as shown in Figure \subref*{dyad_samplesa}, so this experiment tests Theorem \ref{thm:small}.  In this case, Theorem \ref{thm:small} suggests that $\F \setminus \L$ should grow logarithmically.  Figure \subref*{dyad_dataa} shows the sample means versus number of dyads and a best fit logarithmic curve of the form $y = \alpha \ln n$, where $n = {N \choose 2}$ denotes the number of dyads.  A linear regression on $y/\ln n$ versus $\ln n$ gave $\alpha =0.3142$ which falls in the range specified by Theorem \ref{thm:small}.

\begin{figure}[t]
\centering
\subfloat[Criteria $|\Delta x|$,$|\Delta y|$]{
\includegraphics[width=0.45\textwidth]{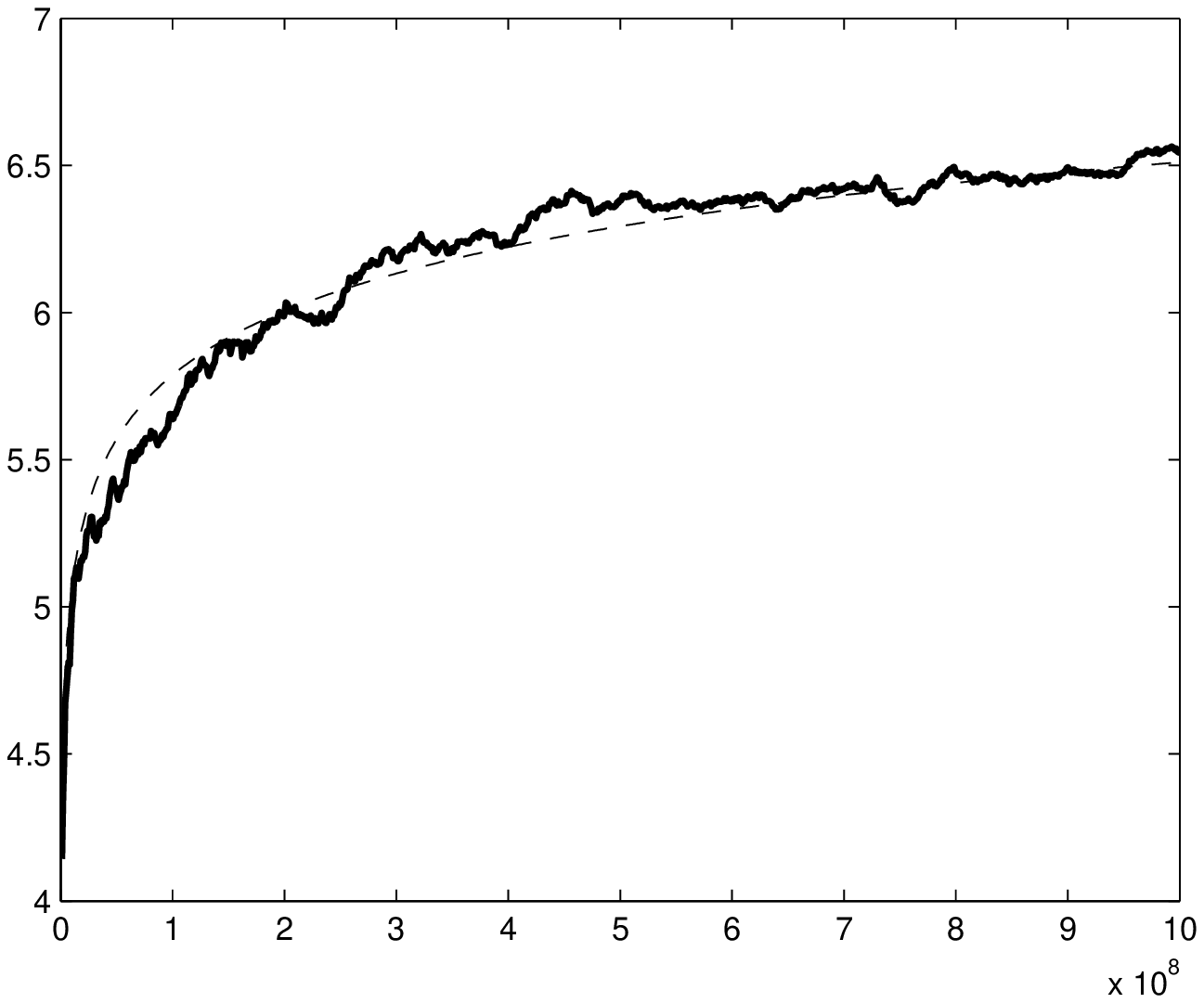}
\label{dyad_dataa}
}
\subfloat[Criteria $|\Delta x| + |\Delta y|$,$|\Delta x| - |\Delta y|$]{
\includegraphics[width=0.45\textwidth]{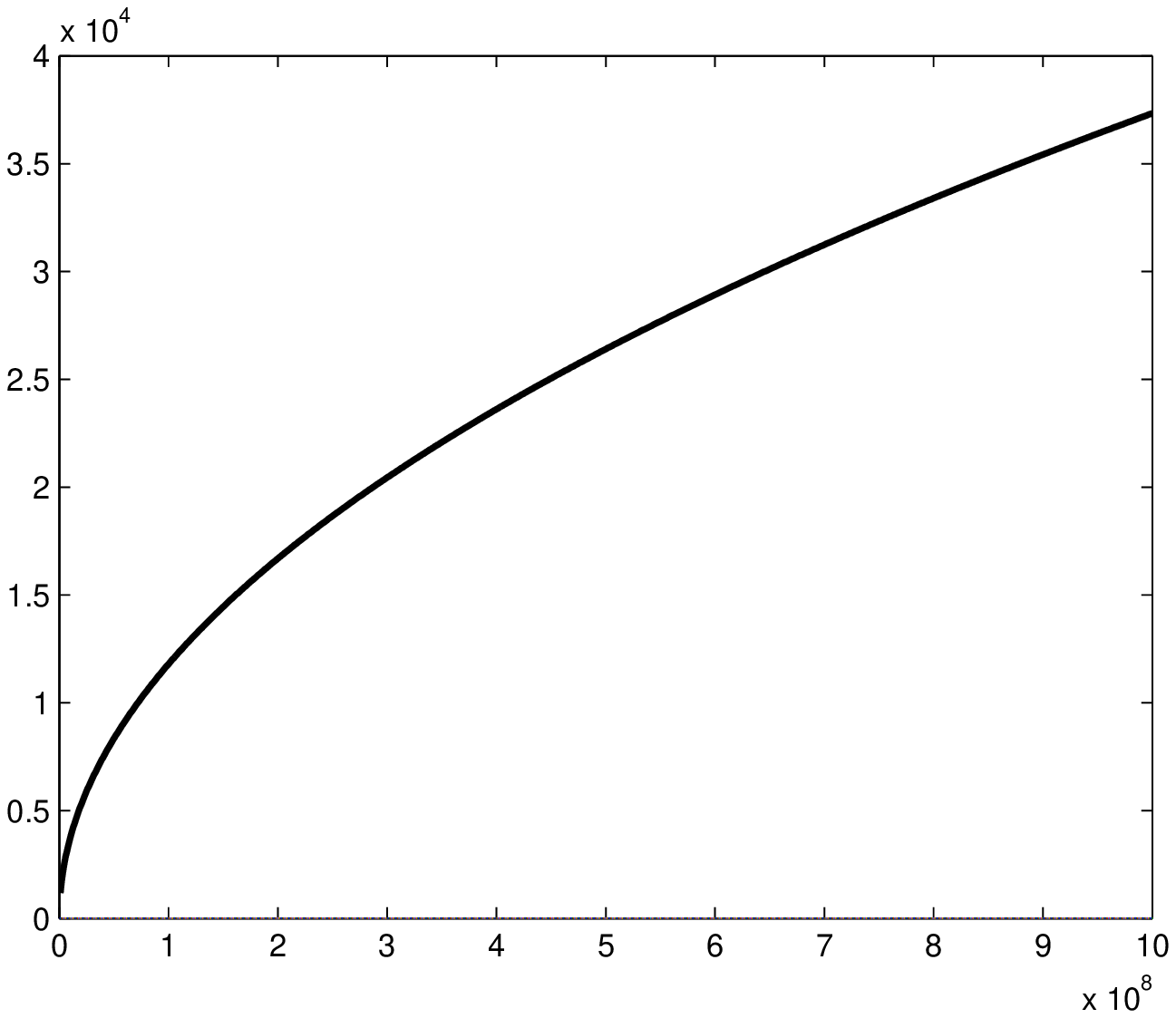}
\label{dyad_datab}
}
\caption{Sample means for $E|\F\setminus\L|$ versus $n$.  We can see the expected logarithmic and half-power growth in (a) and (b) respectively.  The dotted lines indicate the best fit curves described in this section.  In (b), the best fit curve is too closely aligned with the experimental data to be visible.}
\label{dyad_data}
\end{figure}

We next looked to find criteria that induce domains other than boxes in order to test Theorem \ref{thm:asym}.  A somewhat contrived example involves the criteria $|\Delta x| + |\Delta y|$ and $|\Delta x| - |\Delta y|$, which, when applied to uniformly sampled data on $[0,1]^2$, yields dyads sampled on a diamond domain, as shown in Figure \subref*{dyad_samplesb}.   In this case, Theorem \ref{thm:asym} suggests that $\F\setminus\L$ should grow as $\sqrt{n}$.  Figure \subref*{dyad_datab} shows the sample means versus number of dyads and a best fit curve of the form $y = \alpha n^\beta$.  A linear regression on $\ln y$ versus $\ln n$ gave $\alpha=1.1642$ and $\beta = 0.5007$.  Although this example may not be practical, it is simply meant to illustrate the applicability of Theorem \ref{thm:asym} for non-independent samples. In each experiment, we varied the number of dyads between $10^6$ to $10^9$ in increments of $10^6$ and computed the size of $\F\setminus \L$ after each increment.  We ran each experiment $1,000$ times to compute the sample means shown in Figure \ref{dyad_data}.  

\section{Implementation of PDA anomaly detector}
\label{implement}
Pseudocode for the PDA anomaly detector was presented as Algorithm \ref{alg} in 
Section \ref{score}. 
The training phase involves creating $N \choose 2$ dyads 
corresponding to all pairs of training samples. 
Computing all pairwise dissimilarities in each criterion requires 
$O(mKN^2)$ floating-point operations (flops), where $m$ denotes the 
number of dimensions involved in computing a dissimilarity. 
The Pareto fronts are constructed by non-dominated sorting. 
In Section \ref{FastSort} we present a fast algorithm for non-dominated 
sorting in two criteria; for more than two criteria, we use the 
non-dominated sort of \citet{Deb2000} that constructs all 
of the Pareto fronts using $O(KN^4)$ comparisons in the worst case.

The testing phase 
involves creating dyads between the test sample and the $k_l$ nearest 
training samples in criterion $l$, which requires $O(mKN)$ flops. 
For each dyad $D_i^\text{new}$, we need to calculate the depth $e_i$. 
This involves comparing the test dyad with training dyads on 
multiple fronts until we find a training dyad that is dominated by the 
test dyad. 
$e_i$ is the front that this training dyad is a part of. 
Using a binary search to select the front and another binary search 
to select the training dyads within the front to compare to, 
we need to make $O(K\log^2 N)$ comparisons (in the worst case) to 
compute $e_i$. 
The anomaly score is computed by taking the mean of the $s$ $e_i$'s 
corresponding to the test sample; the score is then compared against a 
threshold $\sigma$ to determine whether the sample is anomalous. 
As mentioned in the Section \ref{score}, both the training and testing phases scale 
linearly with the number of criteria $K$. 

\subsection{Fast non-dominated sorting for two criteria}
\label{FastSort}
We present here a fast algorithm for non-dominated sorting in two criteria.  The standard algorithm of \citet{Deb2000} takes $O(n^2)$ time and requires $O(n^2)$ memory, 
where $n={N \choose 2}$ is the number of dyads. 
In our experience, the memory requirement is the largest obstacle to applying Pareto methods to large data sets.  Our algorithm runs in $O(n^{3/2})$ time on average and requires $O(n)$ memory.  It is based on the following observation: if the data set is sorted in ascending order in the first criterion, then the first point is Pareto-optimal, and each subsequent Pareto-optimal point can be found by searching for the next point in the sorted list that is not dominated by the most recent addition to the Pareto front.  For two criteria, there are on average $O(\sqrt{n})$ Pareto fronts, and finding each front with this algorithm requires visiting at most $n$ points, hence the $O(n^{3/2})$ average complexity.  The worst case complexity is $O(n^2)$ occurring when each Pareto front consists of a single point.  Pseudocode for the algorithm is shown in Algorithm \ref{fast_ndom}.  It has recently come to our attention that an $O(n\ln n)$ algorithm exists for the canonical anti-chain partition problem~\cite{felsner1999}, which is equivalent to non-dominated sorting in two criteria, and can also be used to quickly construct the Pareto fronts.

\begin{algorithm}[t]
\caption{Fast non-dominated sorting.}
\label{fast_ndom}
\begin{algorithmic}[1]
   \REQUIRE{Arrays $X$ and $Y$ of length $n$ (the values of the two criteria)}
   \STATE {Sort $X$ and $Y$ according to $X$ in ascending order}
	\WHILE { $X$ and $Y$ are nonempty}
		\STATE {Add $(X(1),Y(1))$ to current Pareto front}
      \STATE {$y\leftarrow Y(1)$}
      \FOR {$i=2\to{\rm length}(X)$}
		   \IF {$Y(i) \leq y$}
            \STATE {Add $(X(i),Y(i))$ to current Pareto front}
            \STATE {$y \leftarrow Y(i)$}
         \ENDIF
	   \ENDFOR
      \STATE {Remove current Pareto front from $X$,$Y$}
   \ENDWHILE
\end{algorithmic}
\end{algorithm}

\subsection{Selection of parameters}
\label{param}
The parameters to be selected in PDA are $k_1, \ldots, k_K$, which 
denote the number of nearest neighbors in each criterion. 
We connect each test sample $X$ 
to a training sample $X_j$ if $X_j$ is one of the $k_i$ nearest neighbors of 
$X$ in terms of the dissimilarity measure defined by criterion $i$. 
We now discuss how these parameters $k_1, \ldots, k_K$ can be selected. 
For simplicity, first assume that there is only one criterion, so that a 
single parameter $k$ is to be selected. 
PDA is able to detect an anomaly if the distribution of its dyads with 
respect to the Pareto fronts differs from that of a nominal sample. 
Specifically the mean of the depths of the dyads (the $e_i$'s) corresponding 
to an anomalous sample must be higher than that of a nominal sample. 
If $k$ is chosen too small, this may not be the case, especially if there 
are training samples present near an anomalous sample, in which case, 
the dyads corresponding to the anomalous sample may reside near shallow 
fronts much like a nominal sample. 
On the other hand, if $k$ is chosen too large, many dyads may correspond 
to connections to training samples that are far away, even if the test 
sample is nominal, which also makes the mean depths of nominal and anomalous 
samples more similar.

We propose to use the properties of $k$-nearest neighbor graphs ($k$-NNGs) 
constructed on the training samples to select the number of training samples 
to connect to each test sample. 
We construct symmetric $k$-NNGs, i.e.~we connect samples $i$ and $j$ if 
$i$ is one of the $k$ nearest neighbors of $j$ or $j$ is one of the $k$ 
nearest neighbors of $i$. 
We begin with $k=1$ and increase $k$ until the $k$-NNG of the 
training samples is connected, i.e.~there is only a single connected 
component. 
By forcing the $k$-NNG to be connected, we ensure that there are no 
isolated regions of training samples. 
Such isolated regions could possibly lead to dyads corresponding to anomalous 
samples residing near shallow fronts like nominal samples, 
which is undesirable. 
By keeping $k$ small while retaining a connected $k$-NNG, we are trying to 
avoid the problem of having too many dyads so that even a nominal sample may 
have many dyads located near deep fronts. 
This method of choosing $k$ to retain connectivity has been used as a 
heuristic in other unsupervised learning problems, such as spectral clustering 
\citep{vonLuxburg2007}. 
Note that by requiring the $k$-NNG to be connected, we are implicitly assuming 
that the training samples consist of a single class or multiple classes 
that are in close proximity. 
If the training samples contain multiple well-separated classes, 
such an approach may not work well. 

Now let's return to the situation PDA was designed for, with $K$ different 
criteria. 
For each criterion $i$, we construct a $k_i$-NNG using the corresponding 
dissimilarity measure and increase $k_i$ until the $k_i$-NNG is connected. 
We then connect each test sample to $s = \sum_{i=1}^K k_i$ training 
samples. 
Note that we are choosing each $k_i$ independent of the other criteria, 
which is probably not an optimal approach. 
In principle, an approach that chooses the $k_i$'s jointly could perform 
better; however, such an approach would add to the complexity.
We choose \emph{separate} $k_i$'s for each criterion, 
which we find is necessary to obtain good performance when different 
dissimilarities have varying scales and properties. 
There are, however, pathological examples where the independent approach 
could choose $k_i$'s poorly, such as the well-known example of two moons. 
These examples typically involve multiple well-separated classes, which may be 
problematic as previously mentioned. 
How to choose the $k_i$'s when the training samples contain multiple 
well-separated classes is beyond the 
scope of this paper and is an area for future work. 
We find the proposed heuristic to work well in practice, including for both 
examples presented in Section \ref{exp}.

\section{Additional discussion on pedestrian trajectories experiment}
\label{traj_add}
\begin{figure}[t]
\begin{center}
\mbox{	
  \includegraphics[width=5 cm, trim=0.5cm 0cm 0.5cm 0.7cm, clip=true]{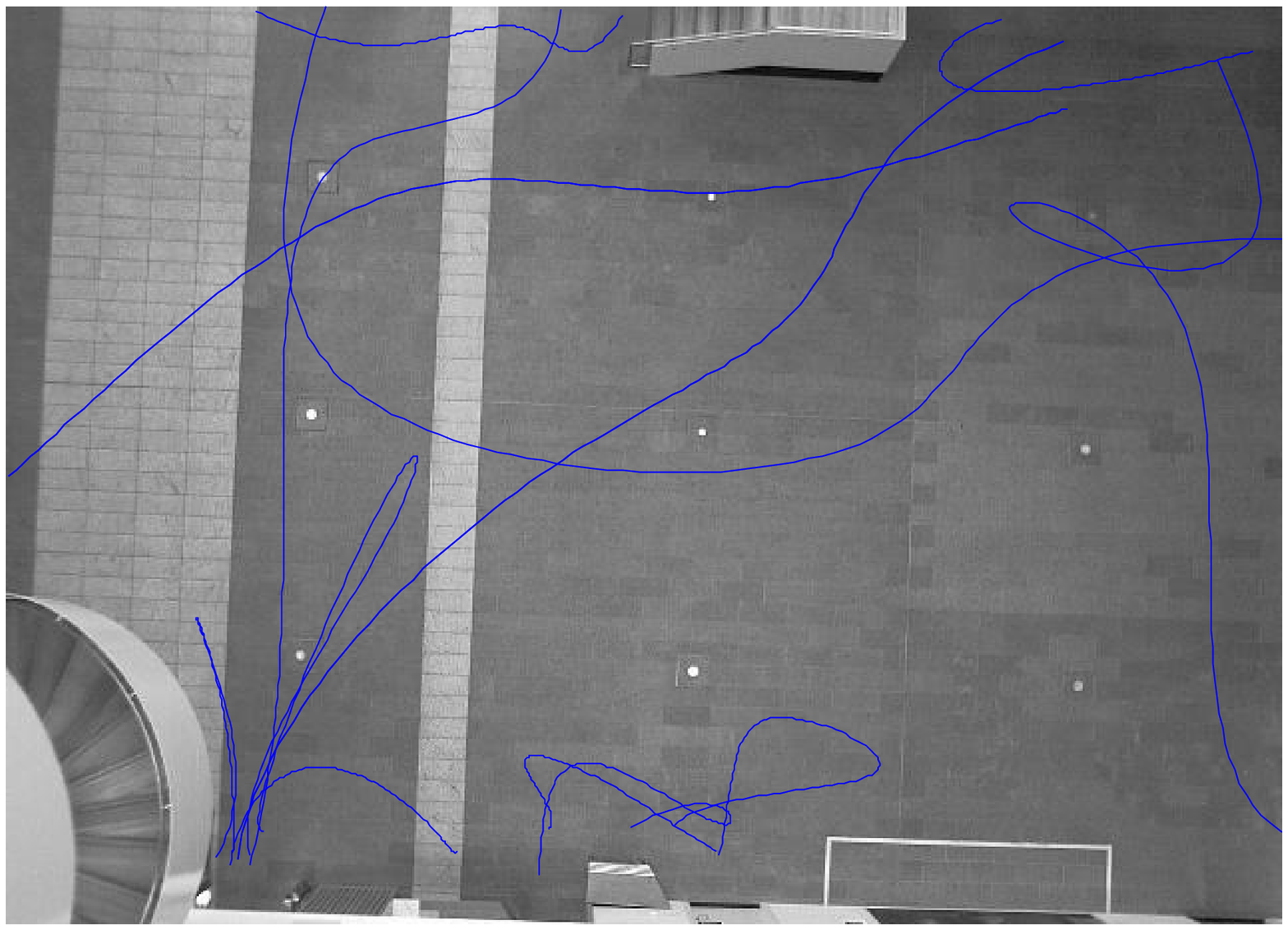}
  \includegraphics[width=5 cm, trim=0.5cm 0cm 0.5cm 0.7cm, clip=true]{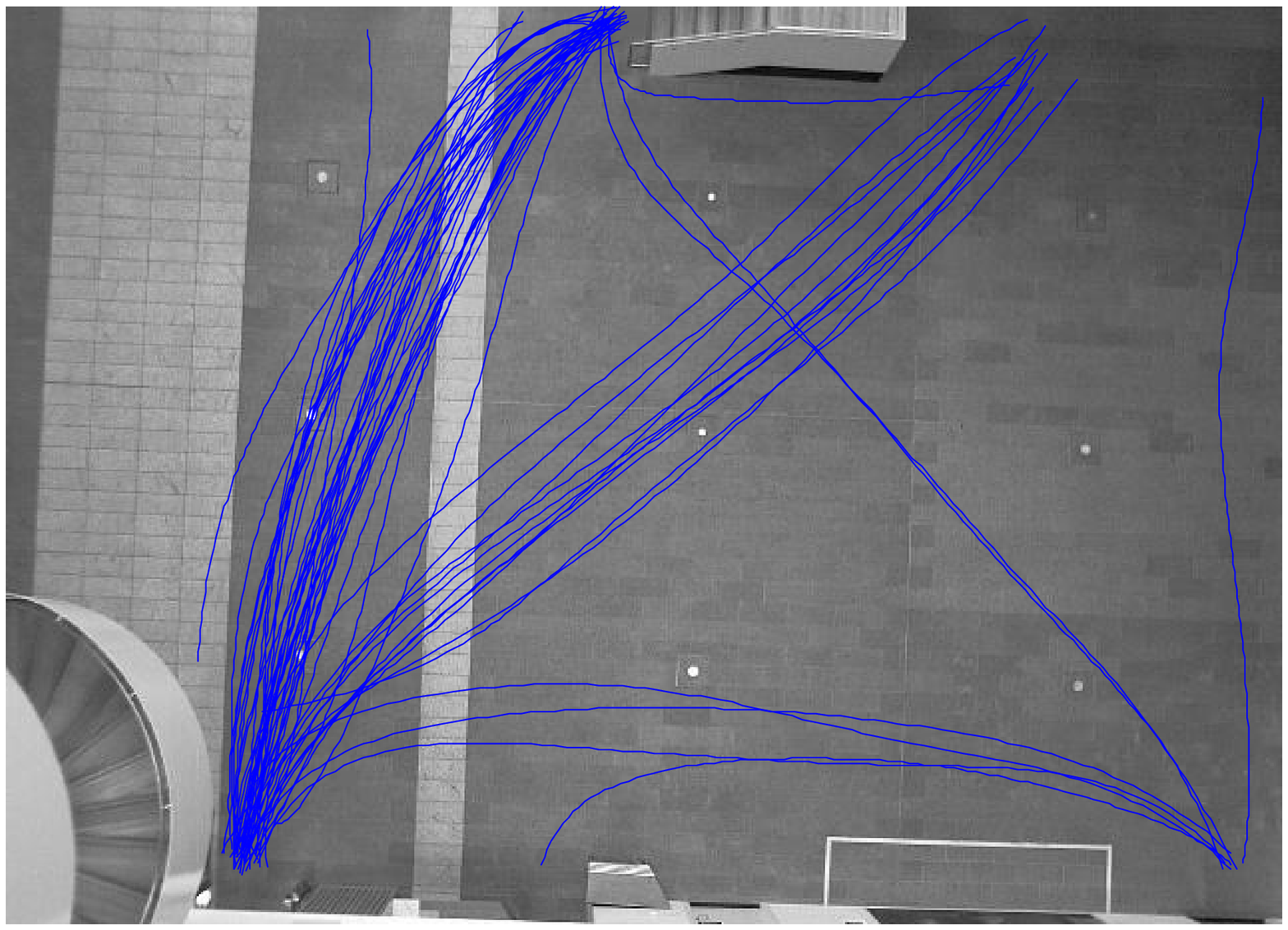}
  }
  \caption{\emph{Left:} Some abnormal trajectories detected by PDA method. 
  \emph{Right:} Trajectories with relatively low anomaly scores.  }
  \label{traj_PDA}
\end{center}
\end{figure}

\begin{figure}[t]
\begin{center}
  \includegraphics[width=8 cm]{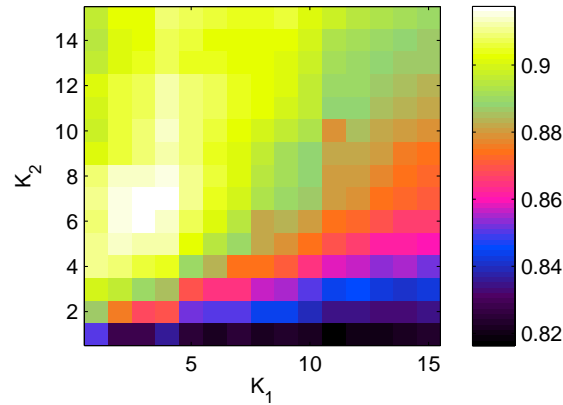}
  \caption{AUCs for different choices of $[k_1,k_2]$. 
  The automatically selected parameters $[k_1=3,k_2=6]$ are very close to the 
  optimal parameters $[k_1=4,k_2=7]$.}
  \label{auc_heatmap}
  \end{center}
\end{figure}

Figure \ref{traj_PDA} shows some abnormal trajectories and nominal trajectories
detected using PDA. 
Recall that the two criteria used are walking speed and trajectory shape. 
Anomalous trajectories could have anomalous speeds or shapes 
(or both), so some anomalous trajectories in Figure \ref{traj_PDA} may not 
look anomalous by shape alone. 
We find that the heuristic proposed in Section \ref{param} for choosing the 
$k_i$'s performs quite well in this experiment, as shown in Figure 
\ref{auc_heatmap}. 
Specifically, the AUC obtained when using the parameters chosen by the proposed 
heuristic is very close to the AUC obtained when using the optimal parameters, 
which are not known in advance. 
As discussed in Section \ref{traj}, it is also higher than the 
AUCs of all of the single-criterion anomaly detection 
methods, even under the best choice of weights.

\end{appendices}

\end{document}